\PassOptionsToPackage{table}{xcolor}
\documentclass{article}
\usepackage{tamedflow_preprint,times}

\usepackage{amsmath,amsfonts,bm}

\def\eqref#1{equation~\ref{#1}}

\def\1{\bm{1}}

\DeclareMathAlphabet{\mathsfit}{\encodingdefault}{\sfdefault}{m}{sl}
\SetMathAlphabet{\mathsfit}{bold}{\encodingdefault}{\sfdefault}{bx}{n}

\usepackage[T1]{fontenc}
\usepackage[utf8]{inputenc}
\usepackage{hyperref}
\hypersetup{colorlinks=true,citecolor=blue,linkcolor=blue,urlcolor=blue,pdfborder={0 0 0}}

\usepackage{url}
\usepackage{enumitem}
\usepackage{booktabs}
\usepackage{amsfonts,amsmath,amssymb,amsthm}
\usepackage{nicefrac}
\usepackage{microtype}
\usepackage{graphicx}
\usepackage{algorithm}
\usepackage{algpseudocode}
\usepackage{multirow}
\usepackage{adjustbox}
\usepackage{caption}
\usepackage{framed}
\usepackage{wrapfig}
\usepackage{xcolor}
\usepackage{titlesec}

\newtheorem{prop}{Proposition}[section]
\title{TamedFlow: Trajectory-Aware Distillation\\for Teacher-Assisted Diffusion Sampling}
\author{%
Shuaiqi Cheng\textsuperscript{1,*}\quad
Jiahao Zhang\textsuperscript{1,*}\quad
Boyuan Gu\textsuperscript{2}\quad
Yuxi Chen\textsuperscript{3}\\[4pt]
Haiying Xu\textsuperscript{1}\quad
Zexin Lu\textsuperscript{4}\quad
Huiyu Zhou\textsuperscript{5}\quad
Xuming Hu\textsuperscript{1,3}\\[9pt]
{\small\textsuperscript{1}The Hong Kong University of Science and Technology (Guangzhou)}\\[2pt]
{\small\textsuperscript{2}University of Electronic Science and Technology of China}\\[2pt]
{\small\textsuperscript{3}The Hong Kong University of Science and Technology}\\[2pt]
{\small\textsuperscript{4}Huawei Technologies Ltd.}\\[2pt]
{\small\textsuperscript{5}Guangxi Zhuang Autonomous Region Information Center}\\[6pt]
{\small\textsuperscript{*}These authors contributed equally.}%
}

\begin{document}

\maketitle

\begin{abstract}
Diffusion Transformers (DiTs) deliver high-fidelity image and video generation, but inference requires many sequential model evaluations. Two acceleration strategies are increasingly important: model compression makes each evaluation cheaper, while adaptive inference avoids selected teacher computation through reuse, forecasting, or speculative execution. Their combination, however, is brittle: directly applying the evaluated adaptive methods to a compressed student can degrade quality, an observed failure mode we call the \textbf{Combination Gap}. We link this gap to a mismatch between pointwise distillation and short-horizon execution: a student trained at teacher states need not remain reliable when its own predictions are chained. We propose \textbf{TamedFlow}, which trains a compressed DiT for this deployed role through cross-architecture feature alignment, a near-identity Velocity Alignment Head, and randomized rollout supervision. At inference, a lightweight Learnable Trajectory Estimator routes completed drafts: accepted drafts bypass the teacher, while rejected drafts roll back and receive one teacher rescue step. Distilling DiT-XL/2 into DiT-B/2, TamedFlow reaches FID $2.58$ at $2.25\times$ measured speedup over DDIM-50 on ImageNet $256\times256$. Ablations support trajectory supervision, and routing controls favor LTE over random decisions at nearly matched latency. Evaluations on ImageNet $512\times512$, MS-COCO, and Latte/FaceForensics extend the empirical scope.
\end{abstract}

\section{Introduction}

Diffusion Transformers (DiTs) have become a leading backbone for high-fidelity visual generation~\citep{peebles2023dit}, yet iterative denoising imposes high inference latency. Each sample requires dozens of sequential forward passes, with costs scaling rapidly with model parameters, resolution, and sequence length. Existing acceleration strategies attack complementary aspects of this cost: step-reduction methods shorten the solver schedule~\citep{song2023consistency,lcm2023,meanflow2025,kim2025ard}; parameter compression lowers per-step computation; and adaptive inference reuses or forecasts intermediate activations~\citep{deepcache2024,fora2024,taylorseer2025,speca2025}. Combining parameter compression with adaptive execution is intuitively appealing, promising cheaper evaluations while skipping redundant computation. All headline experiments below use fixed-step DDIM integration to evaluate compression and adaptive execution under a unified solver convention.

In practice, this composition can break down. In our controlled diagnostic (Appendix Table~\ref{tab:same_student_validation}), attaching the evaluated adaptive reuse methods directly to a compressed student degrades sample quality, inflating ImageNet $256\times256$ FID from $5.56$ to between $10.40$ and $13.18$. We term this failure mode the \textbf{Combination Gap}. This result exposes an operational mismatch: a student trained for pointwise agreement need not remain reliable after its own predictions alter the states encountered later. Rather than applying internal reuse unchanged to the compact model, we let the student generate short trajectory drafts and reserve teacher intervention for drafts predicted to deviate.

\begin{figure}[t]
  \centering
  \includegraphics[width=0.8\linewidth]{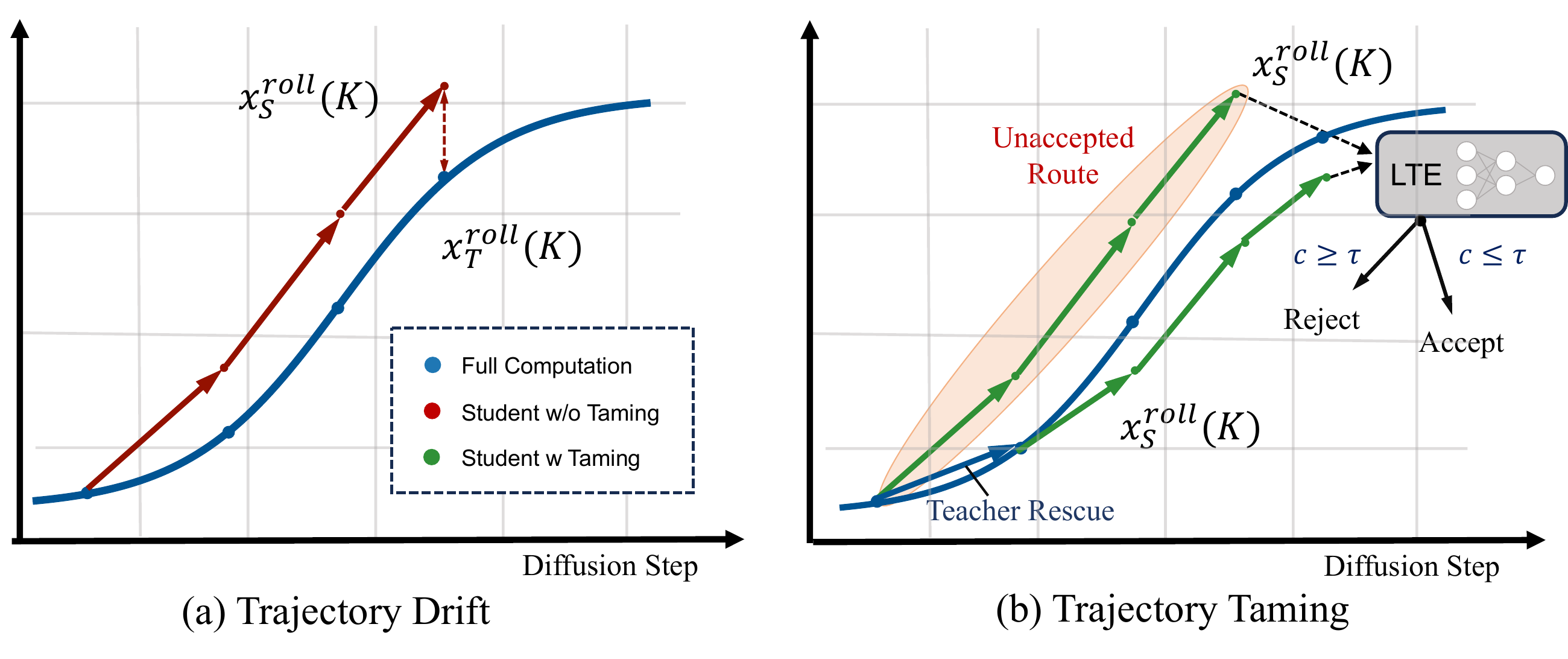}
  \caption{\textbf{The Combination Gap and TamedFlow.} Directly applying adaptive reuse to a parameter-compressed student severely degrades quality in our diagnostic. TamedFlow instead trains the compressed student for short-horizon trajectory agreement. A 4M-parameter LTE routes drafts: accepted drafts bypass the teacher; rejected drafts roll back for one teacher rescue step.}
  \label{fig:motivation}
\end{figure}

Chaining student predictions into drafts, however, introduces compounding integration drift. Proposition~\ref{prop:curvature_trap} characterizes this through an Euler rollout bound: spatial sensitivity can amplify local student--teacher velocity mismatch over sequential steps. To stabilize drafting against this amplification, we introduce \textbf{TamedFlow}, combining trajectory-aware distillation with selective teacher rescue (Figure~\ref{fig:motivation}). During training, cross-architecture feature alignment (AlignNet) grounds student representations in teacher layers; a near-identity Velocity Alignment Head (VAH) absorbs residual velocity errors; and randomized $K$-step rollout supervision ($K \sim \mathcal{U}(1,5)$) penalizes both intermediate mismatch and accumulated endpoint drift. A two-stage curriculum introduces the multi-step objectives only after feature alignment.

At inference, TamedFlow uses an asymmetric execution policy. A compact Learnable Trajectory Estimator (LTE, 4M parameters) scores completed drafts from pooled student tokens and timestep embeddings without teacher access. If confidence exceeds threshold $\tau$, the draft is accepted, advancing $K$ steps while bypassing the teacher. If rejected, the sampler rolls back to the pre-draft state and advances one step with teacher rescue. Varying $\tau$ selects among measured operating points between throughput and fidelity without sequential teacher verification.

Distilling DiT-XL/2 into a 130M DiT-B/2 student on ImageNet $256\times256$, TamedFlow attains FID $2.58$ at $2.25\times$ measured speedup ($0.464$~s vs. $1.044$~s) over the DDIM-50 teacher. At comparable latency, TamedFlow improves FID over the evaluated training-free adaptive baselines (e.g., FoCa at FID $2.65$, $0.466$~s). Controlled ablations support trajectory supervision and learned routing at nearly matched latency, while evaluations on ImageNet $512\times512$, MS-COCO, and Latte video DiTs extend the empirical scope. In summary, our contributions are:
\begin{itemize}[leftmargin=12pt,itemsep=0.5pt,topsep=0pt,parsep=0pt]
  \item \textbf{Diagnostic and analysis of the Combination Gap.} We document the failure of applying the evaluated adaptive reuse methods to a compressed DiT, and derive a DDIM rollout bound characterizing how sequential drafting can amplify local velocity errors.
  \item \textbf{Trajectory-stabilized speculative distillation.} We introduce TamedFlow, uniting feature alignment, residual velocity correction, and randomized rollout supervision to optimize a compressed DiT for autonomous multi-step drafting.
\item \textbf{Learned routing and deployment evaluation.} We propose an asymmetric execution policy via a 4M-parameter LTE to accept reliable drafts or trigger teacher rescue, evaluating quality--latency trade-offs across resolutions, text conditioning, and video alongside physical deployment costs.
\end{itemize}

\section{Related Work}
\label{sec:related}

Prior work on DiT acceleration reduces the number of solver evaluations, the cost of each evaluation, or the teacher computation retained during adaptive execution. TamedFlow studies their interface: whether a compressed student can reliably replace short segments of a teacher trajectory.

\subsection{Step Count Reduction in Diffusion and Flow Models}
\label{sec:step-distillation}

Step reduction lowers the number of solver evaluations. Higher-order solvers improve numerical efficiency for a fixed model~\citep{song2020ddim,lu2022dpmsolverpp}, while distillation changes the model: progressive distillation shortens the schedule~\citep{salimans2022progressive}, consistency models match predictions across noise levels~\citep{song2023consistency,lcm2023}, MeanFlow and Model-Guidance alter the target or absorb guidance~\citep{meanflow2025,modelguidance2025}. Other formulations align distributions or scores, distill autoregressively, transfer trajectories, or train few-step students adversarially~\citep{dmd22024,sid2025,kim2025ard,luo2025tdm,sauer2023add}. These methods optimize a sparse standalone sampler deployed entirely on its own. TamedFlow instead retains a multi-step teacher as a reliable safeguard and uses the student to propose short trajectory segments whose acceptance is decided adaptively online.

A related line studies trajectory geometry. Rectified flow and InstaFlow construct straighter paths~\citep{liu2024insta}; interpolant transformers unify diffusion and flow parametrizations~\citep{ma2024sit}; and guided path sampling corrects trajectories during integration~\citep{gps2025}. TCD and Hyper-SD learn consistency or trajectory segments for few-step sampling~\citep{zheng2024tcd,ren2024hypersd}. TamedFlow keeps the teacher path as the reference, trains a cross-architecture student for selectively accepted rollouts, and retains teacher rescue in the deployed sampler.

\subsection{Per-Step Compute Acceleration and Feature Caching}
\label{sec:per-step}

Per-step methods keep the schedule while avoiding full recomputation. Feature-reuse methods cache activations across timesteps or blocks~\citep{deepcache2024,delta-dit2024,fora2024,toca2025,duca2024,wimbauer2023blockcache}; forecasting methods extrapolate future features~\citep{taylorseer2025,speca2025,lesa2026,harmonica2025}; and learned schedulers select policies per sample~\citep{rapid32026}. These methods exploit local regularity in intermediate computations. Our same-student diagnostic shows that their evaluated configurations do not transfer directly to the compressed student, but does not isolate a universal caching failure mechanism.

Most reuse methods leave the model unchanged and decide which internal computations can be skipped. TamedFlow instead trains the student under rollout supervision and learns whether an entire student-generated segment can advance without a teacher call. DisCa~\citep{disca2026} also studies incompatibility between step-distilled video DiTs and feature caches, but repairs the cache side with a predictor and Restricted MeanFlow schedule. TamedFlow acts on the draft student instead.

\subsection{Speculative Verification in Generative Paradigms}
\label{sec:speculative}

Speculative decoding verifies proposed tokens with a target model~\citep{leviathan2023fast}; causal masking permits parallel verification and exact correction. Diffusion lacks this structure because each teacher state depends on the preceding prediction. Replaying a $K$-step draft exactly would require sequential teacher evaluations and remove much of the saving. SpecDiff~\citep{specdiff2025} studies self-speculative diffusion under this constraint. TamedFlow replaces exact verification with learned trajectory estimation. The LTE scores student-internal features after a draft using rollout agreement as its training target. Accepted drafts avoid the teacher; rejected drafts return to the pre-draft state for one teacher step. Routing is therefore inexpensive and online, but approximate and not distribution-preserving by construction.

\section{Method}
\label{sec:method}
\subsection{Preliminaries: Diffusion Models, DiT, and Flow Matching}
\label{sec:preliminaries}

Diffusion models~\citep{ho2020denoising} learn to reverse a gradual noising process, so generation requires a sequence of state-dependent model evaluations. A Diffusion Transformer (DiT)~\citep{peebles2023dit} implements the denoiser with a transformer over latent patches and injects timestep and condition information through adaptive layer normalization. Our method is expressed in velocity form because both its trajectory loss and its stability analysis act on the vector field followed during sampling.

Flow matching~\citep{lipman2022flowmatching} casts generation as a probability-path ODE. With $x_0\sim\mathcal{N}(0,I)$, $x_1\sim p_{\mathrm{data}}$, and the linear interpolant $x_t=(1-t)x_0+tx_1$, the conditional velocity $x_1-x_0$ supplies the regression target:
\begin{equation}
\label{eq:fm}
\mathcal{L}_{\mathrm{FM}} = \mathbb{E}_{t,x_0,x_1}\!\left[\bigl\|v_\theta(x_t,t) - (x_1-x_0)\bigr\|_2^2\right],\qquad t\sim\mathcal{U}[0,1].
\end{equation}
We use forward flow time, from noise at $t=0$ to data at $t=1$. The main experiments use the stated DDIM schedules; the Euler discretization below is an analysis device and the rollout rule used to define the trajectory objectives.

\subsection{Spatial Error Amplification and Rollout Stability}
\label{sec:curvature_trap}

Pointwise velocity matching constrains the student at sampled states but not the states reached after its own predictions are chained. The following proposition separates the local mismatch that initiates drift from the spatial sensitivity that amplifies it. It applies to sequential Euler drafts and does not analyze internal feature caching or forecasting.

\begin{prop}[$K$-Step Trajectory Alignment Bound]
\label{prop:curvature_trap}
Let $\tilde v_S\equiv v_S^{\mathrm{aligned}}$ denote the deployed student velocity field, and let $x_S^{\mathrm{roll}}(K)$ and $x_T^{\mathrm{roll}}(K)$ be the aligned-student and teacher endpoints after $K$ sequential Euler steps from state $x_t$ with fixed step size $h$. With time arguments suppressed at the common $t_k=t+kh$, define the baseline velocity mismatch $\delta_v^{(k)}\triangleq \tilde v_S(x_T^{(k)})-v_T(x_T^{(k)})$, with $\|\delta_v\|_\infty \triangleq \max_k \|\delta_v^{(k)}\|_2$. Let $L_T\triangleq\sup_{x,t}\|J_{v_T}(x,t)\|_2$ and $\Delta J\triangleq\sup_{x,t}\|J_{\tilde v_S}(x,t)-J_{v_T}(x,t)\|_2$. Assuming both velocity fields are continuously differentiable on a region containing each segment between paired student and teacher states and $C\triangleq L_T+\Delta J<\infty$, the spatial deviation satisfies the discrete Gr\"onwall upper bound:
\begin{equation}
\label{eq:rollout_bound}
\|x_S^{\mathrm{roll}}(K)-x_T^{\mathrm{roll}}(K)\|_2
\leq \frac{\|\delta_v\|_\infty}{C}\big((1+hC)^K-1\big),
\end{equation}
where $h>0$ is the fixed Euler integration step size; for $C=0$, the right-hand side is interpreted as $Kh\|\delta_v\|_\infty$. For $KhC\ll1$, this implies the few-step expansion $Kh\|\delta_v\|_\infty+\mathcal{O}(K^2h^2C\|\delta_v\|_\infty)$.
\end{prop}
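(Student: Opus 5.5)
We argue by a one-step error recursion followed by a discrete Gr\"onwall summation. Write $x_S^{(k)}$ and $x_T^{(k)}$ for the student and teacher Euler iterates, both started at $x_S^{(0)}=x_T^{(0)}=x_t$, and define the error $e_k\triangleq x_S^{(k)}-x_T^{(k)}$, so that $e_0=0$ and $e_K$ is the quantity in \eqref{eq:rollout_bound}. Subtracting the two Euler updates gives
\begin{equation*}
e_{k+1}=e_k+h\bigl(\tilde v_S(x_S^{(k)})-v_T(x_T^{(k)})\bigr)
=e_k+h\bigl(\tilde v_S(x_S^{(k)})-\tilde v_S(x_T^{(k)})\bigr)+h\,\delta_v^{(k)} .
\end{equation*}
This isolates the baseline mismatch $\delta_v^{(k)}$, which is evaluated on the teacher path as in the statement, from a term that involves only the student field at two nearby states.

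The key step is to bound the student increment by $C\|e_k\|_2$. By the assumed continuous differentiability on the segment $\{x_T^{(k)}+s e_k: s\in[0,1]\}$, the mean value theorem in integral form gives $\tilde v_S(x_S^{(k)})-\tilde v_S(x_T^{(k)})=\int_0^1 J_{\tilde v_S}(x_T^{(k)}+se_k,t_k)\,ds\; e_k$. We then split the integrand as $J_{\tilde v_S}=J_{v_T}+(J_{\tilde v_S}-J_{v_T})$ and apply the definitions of $L_T$ and $\Delta J$. This yields the operator-norm bound $\|J_{\tilde v_S}\|_2\le L_T+\Delta J=C$ uniformly along the segment. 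Hence
\begin{equation*}
\|e_{k+1}\|_2\le(1+hC)\|e_k\|_2+h\|\delta_v\|_\infty .
\end{equation*}
The only delicate point is that the suprema defining $L_T$ and $\Delta J$ must cover the segments joining paired states. This is exactly what the regularity hypothesis guarantees, so we take the suprema over that region.

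Unrolling the recursion from $e_0=0$ gives $\|e_K\|_2\le h\|\delta_v\|_\infty\sum_{j=0}^{K-1}(1+hC)^j$. Summing the geometric series produces $\frac{\|\delta_v\|_\infty}{C}((1+hC)^K-1)$ for $C>0$. For $C=0$, every factor equals $1$ and the sum is $Kh\|\delta_v\|_\infty$, which agrees with the stated convention and with the $C\to0$ limit. For the few-step expansion, we expand binomially: $(1+hC)^K-1=KhC+\binom{K}{2}h^2C^2+\cdots$. Dividing by $C$ gives $Kh\|\delta_v\|_\infty$ plus a remainder. When $KhC\ll1$, this remainder is bounded by $\sum_{j\ge2}(KhC)^j/(j!\,C)\le K^2h^2C\,e^{KhC}/2=\mathcal{O}(K^2h^2C)$, multiplied by $\|\delta_v\|_\infty$.

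I expect no real obstacle here. The only step needing care is the Jacobian-splitting bound, because it is where $C$ enters and where the segment-regularity assumption is used. Everything else is a routine Gr\"onwall unrolling.
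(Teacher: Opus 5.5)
Your proposal is correct and follows the paper's proof step for step. Both arguments add and subtract $\tilde v_S(x_T^{(k)})$, use the integral-form Taylor expansion along the segment, split the Jacobian as $\|J_{\tilde v_S}\|_2\le L_T+\Delta J=C$, and unroll $\|e_{k+1}\|_2\le(1+hC)\|e_k\|_2+h\|\delta_v\|_\infty$ geometrically from $e_0=0$. Your explicit remainder bound $K^2h^2C\,e^{KhC}/2$ justifies the few-step expansion somewhat more cleanly than the paper's truncated Taylor series.
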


The recurrence $\|e_{k+1}\|_2\leq(1+hC)\|e_k\|_2+h\|\delta_v\|_\infty$, unrolled from $e_0=0$, gives Eq.~\eqref{eq:rollout_bound}; Appendix~\ref{app:proof_prop1} gives the proof. The implication for method design is limited but useful: reducing one-step mismatch is insufficient when the same error is repeatedly propagated, so training should also expose the student to short rollouts. The bound neither establishes a universal quadratic error law nor guarantees that our objectives minimize $C$.

\subsection{Stagewise Training}
\label{sec:tamedflow}

\begin{figure}[t]
  \centering
  \includegraphics[width=0.95\linewidth]{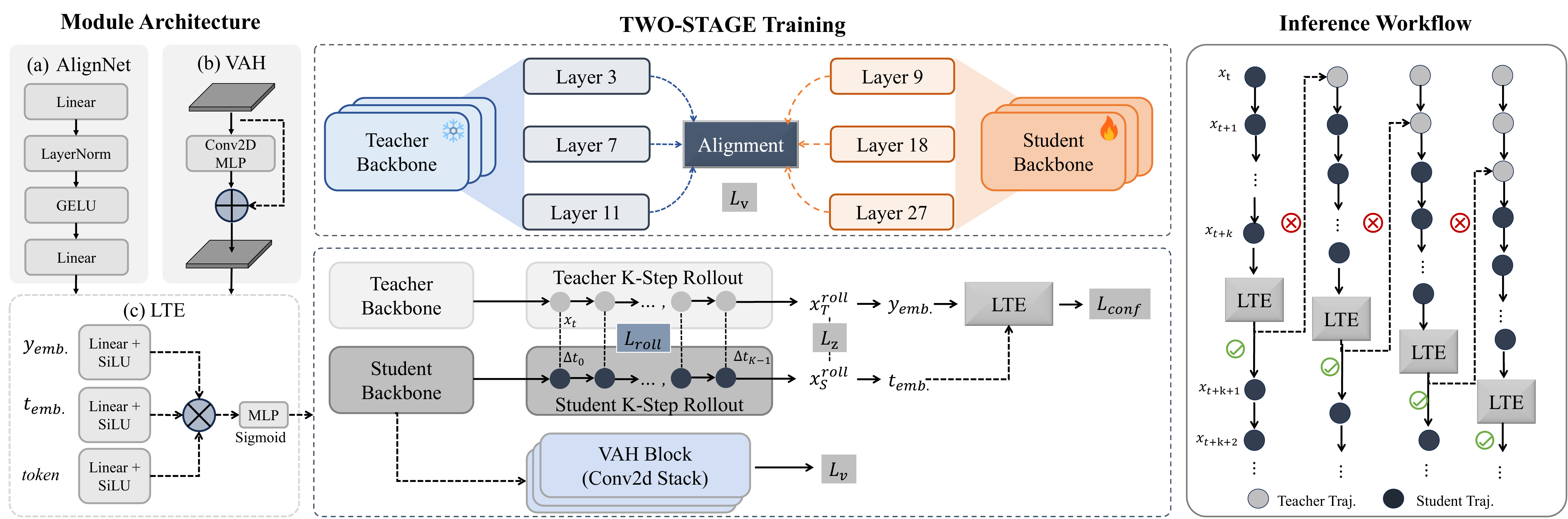}
  \caption{TamedFlow overview. Left: AlignNet, VAH, and LTE attached to the student backbone. Middle: two-stage curriculum. Stage~1 is feature alignment and Stage~2 is rollout supervision. Right: inference routes drafts via the LTE; accepted drafts bypass the teacher, rejected drafts roll back and take one teacher step.}
  \label{fig:pipeline}
\end{figure}

Figure~\ref{fig:pipeline} summarizes AlignNet for feature alignment, VAH for residual velocity correction, and LTE for draft scoring. A two-stage curriculum separates feature initialization from rollout supervision.

\paragraph{Setup and state extraction.} Let $v_T$ and $v_S$ denote the teacher and student velocity fields used by the solver. Pointwise distillation uses $\mathcal{L}_v=\mathbb{E}[D^{-1}\|v_S(x_t,t)-v_T(x_t,t)\|_2^2]$, where $D$ is the latent dimension; Stage~2 applies this loss to the aligned student velocity defined below. The student backbone also exposes hidden tokens from its ordinary forward pass, which the LTE later uses for routing without an additional backbone evaluation.

\paragraph{Stage 1: Feature alignment.} Stage 1 freezes VAH and trains the backbone with velocity and intermediate-feature matching. AlignNet uses 2-layer MLP projectors to pair student layers $\{3,7,11\}$ with teacher layers $\{9,18,27\}$, spanning early through late representations.
\begin{equation}
\label{eq:align}
\mathcal{L}_{\mathrm{align}} = \frac{1}{3}\sum_{l\in\{3,7,11\}}
\frac{\|A_l(f_S^l) - f_T^{\pi(l)}\|_2^2}
     {\|f_T^{\pi(l)}\|_2^2}.
\end{equation}
Here $\pi(3)=9$, $\pi(7)=18$, and $\pi(11)=27$. The Stage~1 objective $\mathcal{L}_{\mathrm{stage1}}=\mathcal{L}_v+\lambda_{\mathrm{align}}\mathcal{L}_{\mathrm{align}}$ initializes a student whose intermediate representation and velocity prediction are both anchored to the teacher before free-running supervision begins.

\paragraph{Stage 2: Rollout supervision.} After Stage 1 stabilizes, we activate rollout-level supervision. To learn a residual velocity correction, we unfreeze the \textbf{Velocity Alignment Head (VAH)}, a convolutional residual head initialized with a small output correction, so $v_S^{\mathrm{aligned}}\approx v_S$:
\begin{equation}
\label{eq:vah}
v_S^{\mathrm{aligned}} = v_S + \mathrm{VAH}(v_S; \phi).
\end{equation}
Stage~2 trains the model for short-horizon use. We compare aligned student and teacher velocities at teacher states and separately penalize the endpoints obtained when each model follows its own rollout:
\begin{equation}
\label{eq:stage2}
\mathcal{L}_{\mathrm{stage2}} = \mathcal{L}_v + \lambda_{\mathrm{roll}}\mathcal{L}_{\mathrm{roll}}(K) + \lambda_z\mathcal{L}_z + \lambda_{\mathrm{conf}}\mathcal{L}_{\mathrm{conf}},
\end{equation}
where $\mathcal{L}_{\mathrm{roll}}(K)=\frac{1}{KD}\sum_{k=0}^{K-1}\|v_S^{\mathrm{aligned}}(x_T^{(k)},t+k\Delta t)-v_T(x_T^{(k)},t+k\Delta t)\|_2^2$ is teacher-state velocity anchoring, whereas $\mathcal{L}_z=D^{-1}\|x_S^{\mathrm{roll}}(K)-x_T^{\mathrm{roll}}(K)\|_2^2$ is the free-running endpoint loss obtained by rolling each field from the same start state on its own states. The endpoint loss backpropagates through the unrolled student steps, while training samples $K$ uniformly from $\{1,2,3,4,5\}$ so the model is not specialized to one horizon. For any fixed teacher-state rollout, $\|\delta_v\|_\infty\leq\sqrt{KD\mathcal{L}_{\mathrm{roll}}}$; this links the empirical loss to the forcing term in Proposition~\ref{prop:curvature_trap}, but is not a uniform guarantee from the average training objective. The \textbf{LTE} is trained with
\begin{equation}
\label{eq:conf_loss}
\mathcal{L}_{\mathrm{conf}} = \mathrm{MSE}\bigl(\mathrm{LTE}(\mathrm{sg}[\bar{f}_S^{\mathrm{last}}, e_{t_{K-1}}, e_y]), \mathrm{sg}[c^\star]\bigr),
\end{equation}
where $\bar{f}_S^{\mathrm{last}}$ denotes mean-pooled hidden tokens from the last student forward in the draft and $\mathrm{sg}[\cdot]$ denotes stop-gradient. Thus the LTE learns from the same type of signal available online, while its loss does not perturb the student backbone.

\subsection{Self-Speculative Execution via Confidence Routing}
\label{sec:inference}

For a $K$-step draft, the target score $c^\star$ combines endpoint rollout deviation with the velocity error at the shared start state:
\begin{equation}
\label{eq:c_star}
c^\star = \exp\!\Bigl(-\alpha\frac{\|x_S^{\mathrm{roll}}(K)-x_T^{\mathrm{roll}}(K)\|_2^2}{D T_z}
- (1-\alpha)\frac{\|v_S^{\mathrm{aligned}}(x_t,t) - v_T(x_t,t)\|_2^2}{D T_v}\Bigr),
\end{equation}
with latent dimension $D$ and temperatures $T_z,T_v$. The score decreases as either discrepancy grows; it is a regression target, not a calibrated probability that a draft preserves sample quality. At inference the LTE predicts this score from student features alone.

\paragraph{LTE inference.} For a clipped draft of length $m$, the LTE uses mean-pooled tokens and the timestep embedding from the last drafting forward at $t_{i+m-1}$, together with the condition representation:
\begin{equation}
\label{eq:lte}
h_{\mathrm{LTE}} = \sigma(g_f)W_1\bar{f}_S + \sigma(g_t)W_2 e_{t_{i+m-1}} + \sigma(g_y)W_3 e_y, \qquad c = \sigma\bigl(w^\top\mathrm{MLP}(h_{\mathrm{LTE}})+b\bigr).
\end{equation}
The LTE has no explicit horizon input, so its score is an empirical routing signal rather than a horizon-specific error guarantee.

\paragraph{Routing.} The student drafts $K$ steps, clipped at the schedule boundary. If $c\geq\tau$, the sampler accepts the entire draft and bypasses the teacher; otherwise it discards the tentative states, restores the pre-draft state, and advances one step with the teacher. Appendix~\ref{app:inference_details} gives the full algorithm.

\section{Experiments}
\label{sec:experiment}

\begin{table}[htbp]
  \centering
  \small
  \setlength{\tabcolsep}{5pt}
  \renewcommand{\arraystretch}{1.1}
  \caption{ImageNet $256\times256$ class-conditional generation with DiT-XL/2.}
  \label{tab:main}
  \scalebox{0.80}{
  \begin{tabular}{lccccc}
    \toprule
    \textbf{Method} & \textbf{Latency (s) $\downarrow$} & \textbf{Speedup} $\uparrow$ & \textbf{FLOPs (T) $\downarrow$} & \textbf{FID $\downarrow$} & \textbf{IS} $\uparrow$ \\
    \midrule
    \textit{DDIM-50 Teacher} & \textit{1.044} & \textit{1.00$\times$} & \textit{23.74} & \textit{2.35} & \textit{237.62} \\
    \textbf{Pure Student}    & 0.180 & 5.80$\times$ & 2.30 & 5.56 {\color{gray}(+3.21)} & 195.07 {\color{gray}(-42.55)} \\
    \midrule
    \multicolumn{6}{c}{\centering \textit{Full-Scale Few-Step Distillation}} \\
    Progressive Distill. (step=4)~\citep{salimans2022progressive} & 0.068 & 15.35$\times$ & $0.474$ & 10.25 {\color{gray}(+7.90)} & 181.58 {\color{gray}(-56.04)} \\
    LCM (step=4)~\citep{lcm2023} & 0.072 & 14.50$\times$ & $0.476$ & 6.81 {\color{gray}(+4.46)} & 202.73 {\color{gray}(-34.89)} \\
    ARD (step=4)~\citep{kim2025ard} ($N=6$) & 0.074 & 14.11$\times$ & $0.480$ & 4.32 {\color{gray}(+1.97)} & 209.03 {\color{gray}(-28.59)} \\
    \midrule
    \multicolumn{6}{c}{\centering \textit{1.5$\times$--2.0$\times$ Speedup Range}} \\
    DDIM-25 & 0.558 & 1.87$\times$ & 11.87 & 3.21 {\color{gray}(+0.86)} & 229.91 {\color{gray}(-7.71)} \\
    $\Delta$-DiT \citep{delta-dit2024} ($N=2$) & 0.684 & 1.53$\times$ & 18.04 & 2.78 {\color{gray}(+0.43)} & 223.47 {\color{gray}(-14.15)} \\
    TaylorSeer \citep{taylorseer2025} ($N=2$) & 0.630 & 1.66$\times$ & 12.35 & 2.47 {\color{gray}(+0.12)} & 230.21 {\color{gray}(-7.41)} \\
    FoCa \citep{foca2025} ($N=2$) & 0.634 & 1.65$\times$ & 12.35 & 2.37 {\color{gray}(+0.02)} & 233.70 {\color{gray}(-3.92)} \\
    SpecDiff~\citep{specdiff2025} & 0.644 & 1.62$\times$ & 14.58 & 2.39 {\color{gray}(+0.04)} & 235.18 {\color{gray}(-2.44)} \\
    \rowcolor{gray!10} \textbf{TamedFlow} ($\tau=0.85$) & \textbf{0.602} & \textbf{1.73$\times$} & $8.53$ & \textbf{2.33} {\color{blue!60!black}(-0.02)} & \textbf{236.15} {\color{blue!60!black}(-1.47)} \\
    \midrule
    \multicolumn{6}{c}{\centering \textit{2.0$\times$--2.5$\times$ Speedup Range}} \\
    DDIM-20 & 0.442 & 2.36$\times$ & 9.49 & 3.87 {\color{gray}(+1.52)} & 219.40 {\color{gray}(-18.22)} \\
    FORA \citep{fora2024} ($N=3$) & 0.482 & 2.17$\times$ & 8.58 & 3.65 {\color{gray}(+1.30)} & 228.14 {\color{gray}(-9.48)} \\
    TaylorSeer \citep{taylorseer2025} ($N=3$) & 0.477 & 2.19$\times$ & 8.56 & 2.73 {\color{gray}(+0.38)} & 229.36 {\color{gray}(-8.26)} \\
    FoCa \citep{foca2025} ($N=3$) & 0.466 & 2.24$\times$ & 8.56 & 2.65 {\color{gray}(+0.30)} & 230.56 {\color{gray}(-7.06)} \\
    SpeCa \citep{speca2025} & 0.504 & 2.07$\times$ & \textbf{4.76} & 2.78 {\color{gray}(+0.43)} & 229.74 {\color{gray}(-7.88)} \\
    \rowcolor{gray!10} \textbf{TamedFlow} ($\tau=0.80$) & \textbf{0.464} & \textbf{2.25$\times$} & $6.83$ & \textbf{2.58} {\color{blue!60!black}(+0.23)} & \textbf{233.43} {\color{blue!60!black}(-4.19)} \\
    \bottomrule
  \end{tabular}}
\end{table}

\subsection{Experimental Setup}
\label{sec:setup}
\paragraph{Models and datasets.} The primary setting distills a 675M DiT-XL/2 teacher $\mathcal{T}_\phi$~\citep{peebles2023dit} into a 130M DiT-B/2 student $\mathcal{B}_\theta$ on class-conditional ImageNet at $256\times256$. We evaluate resolution scaling on ImageNet $512\times512$, text conditioning on MS-COCO, and video generation with Latte on FaceForensics. For COCO, a shared 19.9M text-conditioning adapter maps 77-token CLIP ViT-L/14 features into the DiT conditioning space; the teacher and student use this shared adapter while the ImageNet-trained VAH and LTE remain fixed. For Latte, the student, AlignNet, VAH, and LTE are retrained for the evaluated video-DiT setting. Complete training schedules, solver settings, and adaptation protocols are provided in Appendices~\ref{app:hyperparams} and~\ref{app:efficiency_accounting}.

\paragraph{Metrics and reporting.} On ImageNet, we report FID-50K and Inception Score (IS); on MS-COCO, FID-30K and CLIP-Score computed with OpenAI CLIP ViT-L/14; and on FaceForensics, FVD and video Inception Score. Image and text-to-image wall-clock latency and speedup are measured on a single A800-80GB GPU at batch size one and averaged over 10 runs. All baseline rows are measured in our local implementations under the same reporting protocol. Speedup is normalized to the corresponding teacher, and the LTE accept rate is the fraction of routing events that retain the student draft. FLOPs count backbone evaluations, whereas latency also includes VAH, LTE, rollback, and solver overhead.

\paragraph{Baselines.} We compare against schedule-reduction references (DDIM-25 and DDIM-20), full-scale few-step distillation methods (Progressive Distillation~\citep{salimans2022progressive}, LCM~\citep{lcm2023}, and ARD~\citep{kim2025ard}), and adaptive or speculative methods ($\Delta$-DiT~\citep{delta-dit2024}, TaylorSeer~\citep{taylorseer2025}, FoCa~\citep{foca2025}, FORA~\citep{fora2024}, SpeCa~\citep{speca2025}, and SpecDiff~\citep{specdiff2025}). Adaptive and speculative baselines are re-implemented in the matched DiT-XL/2 pipeline and measured locally under the same latency protocol. Few-step distillation baselines use the full teacher architecture without parameter compression. Appendix~\ref{app:adaptive_baselines} details the comparison scope and accounting assumptions.

\subsection{Main Results}
\label{sec:main_results}

\paragraph{Quality--latency operating points.} Table~\ref{tab:main} compares complete systems across speedup regimes. Four-step distillation baselines attain aggressive $14\times$--$15\times$ acceleration at the cost of severe FID degradation. TamedFlow instead targets moderate acceleration with tight quality retention: at $\tau=0.85$, it matches the DDIM-50 teacher within $0.02$ FID ($2.33$ vs. $2.35$) at $1.73\times$ speedup; at default $\tau=0.80$, it reaches FID $2.58$ at $2.25\times$ speedup ($0.464$~s).

In the $2.0\times$ to $2.5\times$ range under nearly matched latencies around $0.46$ to $0.48$ seconds, TamedFlow achieves better quality than FoCa at FID $2.65$ and TaylorSeer at FID $2.73$, alongside a higher Inception Score. This represents a quality advantage at matched latency rather than a throughput claim from millisecond-scale differences. Although SpeCa reports lower backbone FLOPs of $4.76$ T, its FID is $0.20$ worse, and backbone FLOPs omit runtime overheads like rollback, VAH, and LTE that are fully captured by measured latency. Thus, TamedFlow's edge lies in its empirical Pareto trade-off for the deployed system, not a FLOPs-only claim.

\paragraph{Qualitative study.} Figure~\ref{fig:qualitative} compares generated samples under matched speedup ($2.1\times$--$2.25\times$). TamedFlow preserves fine textures and object boundaries more cleanly across diverse classes. For example, it avoids the over-smoothed fur patterns on \textit{Tabby} and \textit{Boxer} seen in competing adaptive methods, and retains clear structural separation between the metal hoops and wood grain on \textit{Barrel}. These visual examples align with aggregate FID trends, while the following ablations isolate the underlying mechanisms.

\begin{figure}[t]
  \centering
  \includegraphics[width=1.0\linewidth]{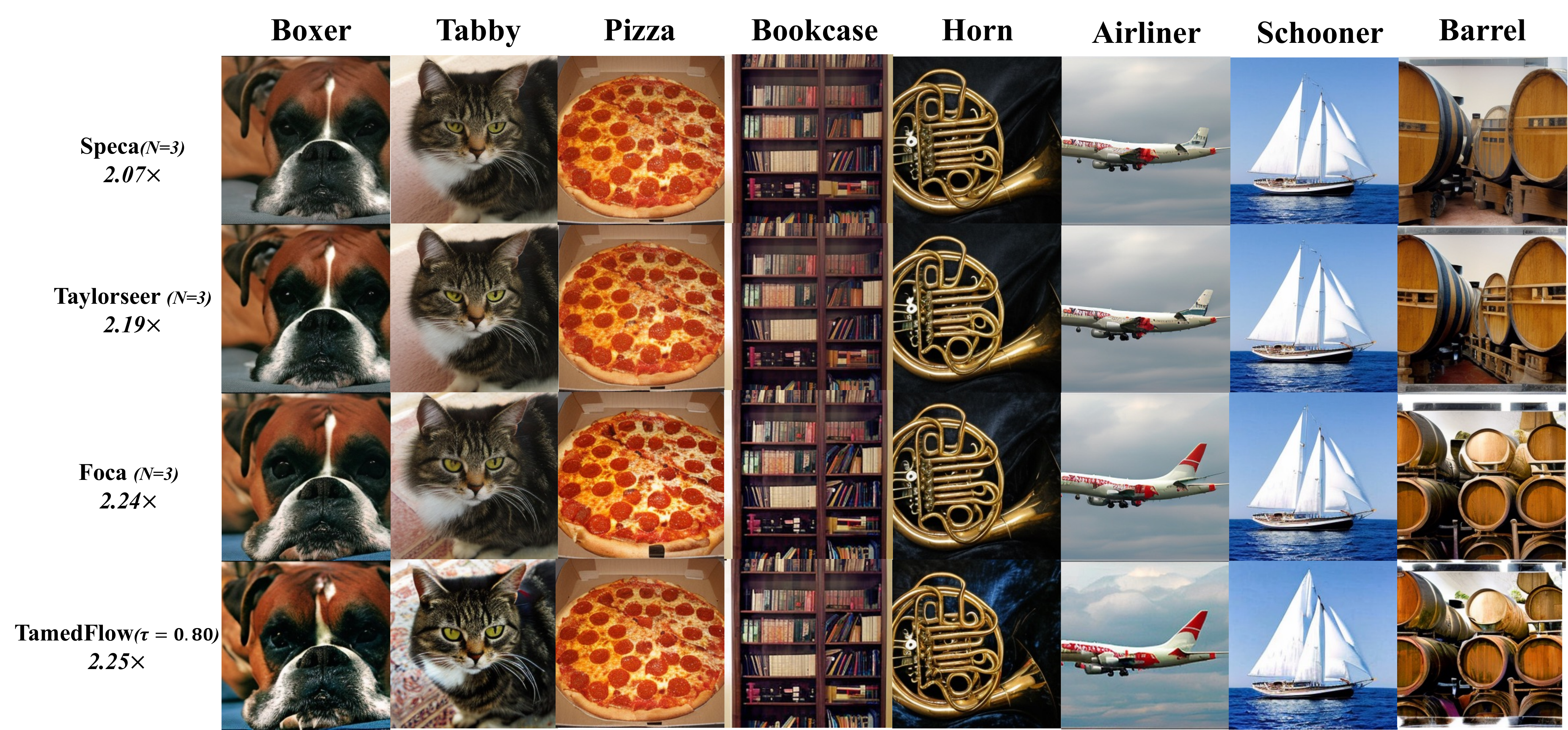}
  \caption{Qualitative comparison on ImageNet $256\times256$.}
  \label{fig:qualitative}
\end{figure}

\subsection{Component Ablation}
\label{sec:ablation}

Table~\ref{tab:abl} isolates the contribution of each architectural module and training objective at draft length $K=3$ and routing threshold $\tau=0.80$.

\begin{table}[h]
  \centering
  \small
  \caption{Ablation studies on ImageNet $256\times256$ at $K=3$ and $\tau=0.80$. Speedup is relative to the 50-step teacher baseline.}
  \label{tab:abl}
  \setlength{\tabcolsep}{4pt}
  \begin{tabular}{llcccc}
    \toprule
    \textbf{Study} & \textbf{Variant} & \textbf{Accept \%} $\uparrow$ & \textbf{Speedup} $\uparrow$ & \textbf{FID} $\downarrow$ & \textbf{IS} $\uparrow$ \\
    \midrule
    \multirow{4}{*}{Component} & Full TamedFlow & 63.6 & 2.25$\times$ & \textbf{2.58} & \textbf{233.43} \\
    & w/o AlignNet & 46.2 & 1.87$\times$ & 2.95 & 211.34 \\
    & w/o VAH & 39.3 & 1.63$\times$ & 3.23 & 205.71 \\
    & w/o LTE & -- & 5.80$\times$ & 5.56 & 195.07 \\
    \midrule
    \multirow{4}{*}{Training} & Full TamedFlow & 63.6 & 2.25$\times$ & \textbf{2.58} & \textbf{233.43} \\
    & w/o $\mathcal{L}_{\mathrm{roll}}$ & 54.2 & 2.02$\times$ & 2.71 & 230.15 \\
    & w/o $\mathcal{L}_{z}$ warmup & 46.2 & 1.81$\times$ & 3.29 & 203.50 \\
    & w/o $\mathcal{L}_{z}$ & 28.1 & 1.45$\times$ & 4.15 & 198.80 \\
    \bottomrule
  \end{tabular}
\end{table}

\textbf{Modules and training objectives.} AlignNet initializes student representations from intermediate teacher layers; removing it drops acceptance to 46.2\% and raises FID to 2.95 due to suboptimal representation grounding. Disabling the VAH reduces acceptance to 39.3\% and increases FID to 3.23, reflecting its role in absorbing local residual velocity mismatch (without claiming an explicit theoretical reduction in $\Delta J$). Without the LTE, unrouted student execution reaches 5.80$\times$ speedup but degrades severely to FID 5.56 in the absence of corrective recovery. For training losses, omitting rollout supervision $\mathcal{L}_{\mathrm{roll}}$ weakens intermediate velocity consistency and increases FID to 2.71. Removing the endpoint loss $\mathcal{L}_z$ has the most severe impact, dropping acceptance to 28.1\% and inflating FID to 4.15; its unrolled multi-step gradient directly penalizes compounding trajectory drift during autonomous rollouts. Removing the 10-epoch warmup on $\mathcal{L}_z$ yields FID 3.29 due to early optimization shocks before feature representations stabilize.

\paragraph{Routing-policy controls.}
Table~\ref{tab:routing_controls} compares policies using the same trained student, VAH, Euler grid, and guidance. At nearly identical latency (0.465 versus 0.464~s), LTE improves FID over random draft acceptance from 4.05 to 2.58. The random control enforces eight teacher updates per trajectory; LTE retains unconstrained $\tau=0.80$ routing. Thus this comparison supports learned routing at nearly matched latency, without establishing equal teacher usage. Static controls are faster but have worse FID. Appendix~\ref{app:routing_controls} gives the execution and call-accounting details.

\subsection{Routing Dynamics and Sensitivity}
\label{sec:hyperparam_dynamics}

\begin{figure}[t]
  \centering
  \includegraphics[width=0.8\linewidth]{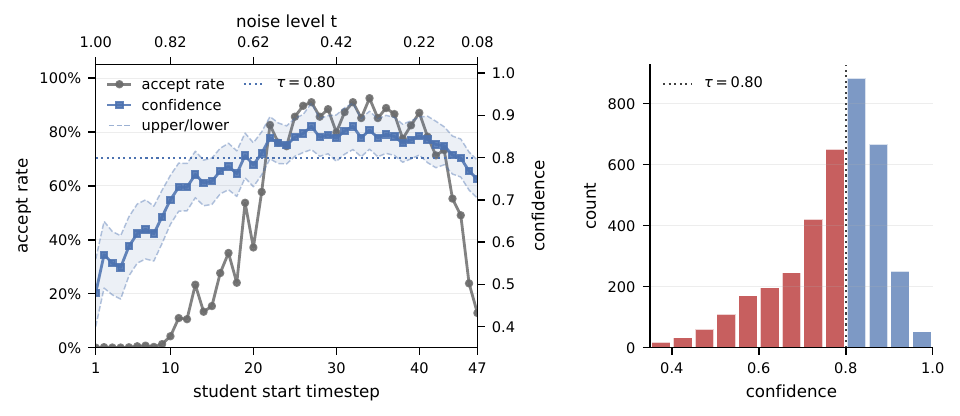}
  \caption{\textbf{LTE routing behavior across generation.} Left: confidence across sampling steps; the upper axis uses a noise-level coordinate, distinct from forward flow time. Right: score distributions of accepted and rejected drafts under $\tau=0.80$.}
  \label{fig:lte}
\end{figure}

Draft length $K$ and threshold $\tau$ balance speedup against fidelity (Appendix Figure~\ref{fig:hyper_k}). Speedup peaks at $K=3$ because longer drafts compound drift and increase rollback costs upon rejection. As shown in Figure~\ref{fig:lte}, LTE confidence remains high in the mid-noise regime but drops near the plotted noise level 0.22, invoking more teacher rescues late in sampling. The distribution shows clear operational separation between accepted and rejected drafts at the chosen operating point $\tau=0.80$.

In the reference trace at $K=3$ and $\tau=0.80$, 14 of 22 routing events are accepted, allowing student rollouts to cover $42$ of the $50$ integration steps. Detailed component profiling reveals that teacher rescue accounts for $71.9\%$ of wall-clock latency, whereas student rollouts take $14.9\%$, rejections with state rollback consume $12.7\%$, and LTE confidence scoring incurs only $0.5\%$. The lightweight estimator itself introduces negligible computational latency, but routing accuracy remains paramount: false acceptances allow subtle draft errors to compound and degrade fidelity, while false rejections trigger expensive teacher interventions that erode throughput. This concrete latency breakdown also explains why measured end-to-end speedup does not strictly match backbone FLOPs, as runtime costs from state rollback, synchronization, and auxiliary modules are omitted from nominal FLOP totals.

\subsection{Cross-Setting Evaluation}
\label{sec:coco-exp}

\begin{table}[h]
  \centering
  \small
  \caption{MS-COCO 30K $256\times256$ text-to-image generation using fixed-step DDIM. TaylorSeer, SpeCa, and FoCa are re-implemented within our DiT-XL/2 pipeline; $N$ denotes each method's acceleration setting rather than TamedFlow's draft horizon $K$.}
  \label{tab:coco}
  \setlength{\tabcolsep}{8pt}
  \renewcommand{\arraystretch}{1.1}
  \begin{tabular}{lcccc}
    \toprule
    \textbf{Method} & \textbf{Latency (s)} $\downarrow$ & \textbf{Speedup} $\uparrow$ & \textbf{FID} $\downarrow$ & \textbf{CLIP-S} $\uparrow$ \\
    \midrule
    \textit{Teacher DDIM-50} & \textit{1.184} & \textit{1.00$\times$} & \textit{13.54} & \textit{30.2} \\
    \textit{Teacher DDIM-25} & \textit{0.622} & \textit{1.90$\times$} & \textit{16.84} & \textit{28.9} \\
    \textbf{Pure Student} & 0.203 & 5.83$\times$ & 19.30 & 27.8 \\
    \midrule
    TaylorSeer ($N=3$) & 0.580 & 2.04$\times$ & 15.47 & 29.1 \\
    SpeCa & 0.561 & 2.11$\times$ & 16.63 & 28.4 \\
    FoCa ($N=3$) & 0.565 & 2.09$\times$ & 16.27 & 28.5 \\
    \rowcolor{gray!10}\textbf{TamedFlow} ($\tau=0.80$) & 0.603 & 1.96$\times$ & \textbf{14.86} & \textbf{29.6} \\
    \rowcolor{gray!10}\textbf{TamedFlow} ($\tau=0.75$) & \textbf{0.551} & \textbf{2.14$\times$} & 15.52 & 29.2 \\
    \bottomrule
  \end{tabular}
\end{table}

On MS-COCO, the teacher and student share a trained 19.9M text adapter while the ImageNet-trained VAH and LTE remain frozen. As reported in Table~\ref{tab:coco}, TamedFlow reaches FID $14.86$ and CLIP-Score $29.6$ at $1.96\times$ speedup at $\tau=0.80$, and $2.14\times$ speedup with FID $15.52$ at $\tau=0.75$, outperforming SpeCa and FoCa at comparable latency. This supports transfer of the frozen alignment modules under condition adaptation rather than unadapted zero-shot deployment. Tightening $\tau$ improves FID by $0.66$ and CLIP-Score by $0.4$ for $52$~ms latency, verifying that fidelity recovery stems from teacher intervention rather than student capacity.

We further evaluate spatial and temporal scaling. On ImageNet $512\times512$, expanding the latent grid fourfold under the same backbone, TamedFlow achieves FID $3.39$ at $2.41\times$ speedup in $1.594$~s, outperforming FoCa at FID $3.47$ with $2.36\times$ speedup as reported in Appendix~\ref{app:highres}. On the Latte video DiT retrained on FaceForensics at $256\times256$ resolution across 16 frames in Table~\ref{tab:latte}, TamedFlow attains FVD $225.10$ and IS $68.90$ at $2.22\times$ speedup, surpassing FoCa at FVD $248.30$ and TaylorSeer at FVD $253.10$. These results confirm that the draft and rescue mechanism transfers reliably across resolutions, conditions, and video modalities.

\begin{table}[h]
  \centering
  \small
  \caption{Latte video-DiT transfer on FaceForensics ($256\times256$, 16 frames).}
  \label{tab:latte}
  \setlength{\tabcolsep}{9pt}
  \renewcommand{\arraystretch}{1.1}
  \begin{tabular}{lccc}
    \toprule
    \textbf{Method} & \textbf{Speedup} & \textbf{FVD} $\downarrow$ & \textbf{IS} $\uparrow$ \\
    \midrule
    Teacher (Latte-XL) & 1.00$\times$ & 208.50 & 71.20 \\
    FoCa & 2.06$\times$ & 248.30 & 66.80 \\
    TaylorSeer & 2.10$\times$ & 253.10 & 65.90 \\
    \rowcolor{gray!10}\textbf{TamedFlow} & \textbf{2.22$\times$} & \textbf{225.10} & \textbf{68.90} \\
    \bottomrule
  \end{tabular}
\end{table}

\paragraph{Deployment cost.} TamedFlow requires offline training and maintains the teacher, student, VAH, and LTE co-resident in memory, increasing peak allocated VRAM from $2.570$ to $3.075$~GB ($\sim$0.5~GB overhead). This footprint reflects the deliberate choice to retain a high-capacity teacher for selective rescue rather than discarding it. Headline latency is benchmarked for batch-one latent integration without VAE decoding, characterizing interactive generation rather than batched serving throughput. Teacher offloading and weight quantization represent orthogonal opportunities to mitigate co-residency overhead in production.

\section{Conclusion}
\label{sec:conclusion}

In this work, we introduced TamedFlow, an acceleration framework that reconciles parameter compression with adaptive execution for Diffusion Transformers. By training a compact student for short-horizon trajectory agreement and employing asymmetric routing with selective teacher rescue, TamedFlow achieves FID $2.58$ at $2.25\times$ speedup over DDIM-50 on ImageNet $256\times256$, with transfer evaluated across resolutions, text conditioning, and video. Our findings show that pointwise alignment can leave compounding trajectory divergence during autonomous drafting, creating the Combination Gap. Controlled ablations support rollout supervision, and routing controls favor LTE over random decisions at nearly matched latency, suggesting that compression and adaptive inference can be made compatible when the compact model is trained as a multi-step drafter and the sampler retains a recovery path back to the teacher.

\bibliographystyle{tamedflow_preprint}
\bibliography{references}
\clearpage

\appendix
\numberwithin{equation}{section}

\setcounter{section}{0}
\setcounter{subsection}{0}

\renewcommand{\thesection}{\Alph{section}}
\renewcommand{\thesubsection}{\thesection.\arabic{subsection}}

\titleformat{\section}
  {\normalfont\Large\bfseries}
  {Appendix~\thesection}
  {1em}
  {}

\titleformat{\subsection}
  {\normalfont\large\bfseries}
  {\thesubsection}
  {1em}
  {}

\section{Training Hyperparameters}
\label{app:hyperparams}

Table~\ref{tab:hyperparams} summarizes the training hyperparameters for the two stages of TamedFlow.

\begin{table}[h]
  \centering
  \caption{Training hyperparameters for TamedFlow.}
  \label{tab:hyperparams}
  \small
  \setlength{\tabcolsep}{5pt}
  \begin{tabular}{lcc}
    \toprule
    \textbf{Component} & \textbf{Stage 1} & \textbf{Stage 2} \\
    \midrule
    \textbf{Optimizer} & \multicolumn{2}{c}{AdamW ($\beta_1=0.9$, $\beta_2=0.999$)} \\
    \textbf{Base learning rate} & $5\times10^{-5}$ & $1\times10^{-4}$ \\
    \textbf{Weight decay} & \multicolumn{2}{c}{0.0} \\
    \textbf{Learning rate schedule} & \multicolumn{2}{c}{cosine decay with warmup} \\
    \textbf{Warmup epochs} & \multicolumn{2}{c}{5} \\
    \textbf{Max epochs} & 10 & 80 \\
    \textbf{Batch size (per GPU)} & \multicolumn{2}{c}{32} \\
    \textbf{Global batch size} & \multicolumn{2}{c}{256} \\
    \textbf{Gradient clipping} & \multicolumn{2}{c}{1.0} \\
    \midrule
    \textbf{Loss terms} & \multicolumn{2}{c}{} \\
    \quad Velocity loss $\mathcal{L}_v$ & \checkmark & \checkmark \\
    \quad Alignment loss $\mathcal{L}_{\mathrm{align}}$ & $\lambda=0.25$ & -- \\
    \quad Rollout loss $\mathcal{L}_{\mathrm{roll}}(K)$ & -- & $\lambda=0.8$ \\
    \quad Endpoint loss $\mathcal{L}_z$ & -- & $\lambda_z$ (10-epoch warmup to 30.0) \\
    \quad Confidence loss $\mathcal{L}_{\mathrm{conf}}$ & -- & $\lambda=1.0$ \\
    \midrule
    \textbf{Model details} & \multicolumn{2}{c}{} \\
    \quad Student architecture & \multicolumn{2}{c}{DiT-B/2 (12 layers)} \\
    \quad Student parameters & \multicolumn{2}{c}{130M} \\
    \quad Teacher architecture & \multicolumn{2}{c}{DiT-XL/2 (28 layers)} \\
    \quad Teacher parameters & \multicolumn{2}{c}{675M} \\
    \quad AlignNet architecture & \multicolumn{2}{c}{3 independent MLPs, $768\!\to\!1152\!\to\!1152$} \\
    \quad VAH architecture & \multicolumn{2}{c}{3-layer Conv2D residual head, hidden=256} \\
    \quad LTE architecture & \multicolumn{2}{c}{3 projections; MLP $1024\!\to\!1024\!\to\!256\!\to\!1$} \\
    \quad LTE parameters & \multicolumn{2}{c}{\textasciitilde4M (3\% of student)} \\
    \midrule
    \textbf{Inference hyperparameters} & \multicolumn{2}{c}{} \\
    \quad Draft length $K$ & \multicolumn{2}{c}{3 (training: $K\sim\mathcal{U}(1,5)$)} \\
    \quad Temperature $T_z$ & \multicolumn{2}{c}{0.5} \\
    \quad Temperature $T_v$ & \multicolumn{2}{c}{0.1} \\
    \quad Mixing weight $\alpha$ & \multicolumn{2}{c}{0.7} \\
    \quad Default $\tau$ & \multicolumn{2}{c}{0.8} \\
    \midrule
    \textbf{LTE training} & \multicolumn{2}{c}{} \\
    \quad Gradient isolation & -- & \checkmark (stop-gradient) \\
    \quad LTE warmup epochs & \multicolumn{2}{c}{5} \\
    \quad LTE learning rate & \multicolumn{2}{c}{$1\times10^{-4}$} \\
    \bottomrule
  \end{tabular}
\end{table}

\section{Baseline Comparison Scope}
\label{app:adaptive_baselines}

Table~\ref{tab:main} compares end-to-end operating points, not isolated routers. The adaptive baselines retain the 675M teacher, whereas TamedFlow combines a 130M draft student with selective teacher rescue; compression, training, solver choice, routing, and rollback therefore all contribute to the measured result. All baseline entries, including SpecDiff, are evaluated in our local implementations under the stated protocol. TamedFlow FLOPs are backbone-call estimates; end-to-end latency additionally includes VAH, LTE, solver, and rollback overhead.

Table~\ref{tab:same_student_validation} serves a different purpose. It is a quality diagnostic in which adaptive reuse is attached directly to the compressed student, and therefore supports the motivating Combination Gap rather than a latency-matched routing comparison.

\section{Higher-Resolution Evaluation}
\label{app:highres}

We evaluate spatial scaling on ImageNet at $512\times512$ with DiT-XL/2, DDIM-50, and one A800 GPU. The comparison holds the resolution and protocol fixed and tests whether the primary quality--latency ordering survives the larger latent grid.

\begin{table}[h]
  \centering
  \small
  \caption{ImageNet $512\times512$ evaluation with DiT-XL/2 using fixed-step DDIM.}
  \label{tab:highres}
  \setlength{\tabcolsep}{8pt}
  \begin{tabular}{lcccc}
    \toprule
    \textbf{Method} & \textbf{Latency (s)} $\downarrow$ & \textbf{Speedup} $\uparrow$ & \textbf{FID} $\downarrow$ & \textbf{IS} $\uparrow$ \\
    \midrule
    Teacher (DiT-XL/2, 675M) & 3.842 & 1.00$\times$ & 2.87 & 235.10 \\
    TaylorSeer ($N=3$) & 1.715 & 2.24$\times$ & 3.76 & 226.75 \\
    FoCa ($N=3$) & 1.628 & 2.36$\times$ & 3.47 & 228.10 \\
    \textbf{TamedFlow ($\tau=0.80$)} & \textbf{1.594} & \textbf{2.41$\times$} & \textbf{3.39} & \textbf{230.85} \\
    \bottomrule
  \end{tabular}
\end{table}

\section{Multi-Seed Evaluation}
\label{app:multiseed}

We generate 50K ImageNet samples for each of three evaluation seeds using fixed checkpoints. These repeats quantify sampling variability, not independent training-run variability; every method follows the same evaluation protocol.

\begin{table}[h]
  \centering
  \small
  \caption{Three-seed ImageNet evaluation (mean $\pm$ standard deviation over seeds).}
  \label{tab:multiseed}
  \setlength{\tabcolsep}{9pt}
  \begin{tabular}{lccc}
    \toprule
    \textbf{Method} & \textbf{Speedup} & \textbf{FID} & \textbf{IS} \\
    \midrule
    DDIM-50 Teacher & 1.00$\times$ & $2.36\pm0.02$ & $237.24\pm0.48$ \\
    TaylorSeer ($N=3$) & 2.19$\times$ & $2.71\pm0.04$ & $229.13\pm0.62$ \\
    FoCa ($N=3$) & 2.21$\times$ & $2.64\pm0.03$ & $231.24\pm0.58$ \\
    \textbf{TamedFlow, $\tau=0.80$} & \textbf{2.26$\times$} & \textbf{$2.56\pm0.03$} & \textbf{$233.89\pm0.51$} \\
    \bottomrule
  \end{tabular}
\end{table}

\section{Efficiency and Transfer Accounting}
\label{app:efficiency_accounting}

\paragraph{Deployment footprint.} In a controlled A800, batch-size-2, FP32, 50-step latent-only profile with VAE decoding excluded, peak allocated VRAM is 2.570 GB for the teacher, 2.580 GB for FoCa ($N=3$), 3.066 GB for TaylorSeer ($N=3,O=3$), and 3.075 GB for TamedFlow. Co-deploying the student and LTE therefore adds about 0.5 GB relative to the teacher-only profile.
For the ImageNet and MS-COCO image experiments, all methods use classifier-free guidance at scale $3.5$. The teacher FLOP reference includes the complete conditional and unconditional CFG evaluation, while the CFG-distilled student reference counts one student forward; this convention defines the teacher/student references and the TamedFlow backbone estimates.

\paragraph{Routing accounting.} In the reference trace at $\tau=0.80$, 14 of 22 routing events are accepted and 8 are rejected, giving a 63.6\% event acceptance rate. Since each accepted event advances $K=3$ Student steps, accepted drafts cover $42/50=84\%$ of the nominal trajectory; the remaining 8 steps use Teacher rescue. The separately measured component profile records 0.07075 s for accepted-draft buckets (14.9\%), 0.06031 s for rejected-draft buckets including rollback (12.7\%), 0.34142 s for Teacher rescue (71.9\%), and 0.00237 s for LTE scoring (0.5\%). These buckets sum to 0.47485 s (up to rounding). They are event-level timing buckets from the profiling run, so dividing a bucket by the number of backbone calls would incorrectly attribute rollback and routing overhead to an individual forward. The component profile and the headline latency measurement were collected in different runs: the 0.464 s headline is the separately measured ten-run average. We therefore report both traces without using one to reconstruct the other.

\paragraph{Backbone FLOP accounting.} The 50-step references imply $f_S=2.30/50=0.046$~T per CFG-distilled student forward and $f_T=23.74/50=0.4748$~T per teacher update including both CFG branches. With the executed call counts defined in Appendix~\ref{app:routing_controls},
\begin{equation}
F_{\mathrm{backbone}}=N_Sf_S+N_Tf_T.
\end{equation}
For the reference execution with 14 accepted and eight rejected full-length drafts, $N_S=66$, $N_T=8$, and $N_{\mathrm{LTE}}=22$, yielding $66(0.046)+8(0.4748)=6.8344$~T. The full-length 13-accept/11-reject reference gives $72(0.046)+11(0.4748)=8.5348$~T. The rounded entries in Table~\ref{tab:main} are call-based estimates for these reference configurations, not evaluation-set means. In contrast, a static eight-teacher policy has 42 student forwards and a backbone subtotal of $42(0.046)+8(0.4748)=5.7304$~T under the same accounting. VAH, LTE, rollback, and solver arithmetic are excluded from these backbone subtotals and included in measured latency. The 22 LTE calls do not require 22 additional student forwards because their features are reused from drafting. Logical conditional and unconditional teacher evaluations may be batched; eight guided teacher updates account for both branches, whereas the CFG-distilled student uses one branch per update. Backbone FLOPs alone do not predict measured latency.

\paragraph{Latte video protocol.} We retrain the student, AlignNet, VAH, and LTE for the evaluated Latte video-DiT setting while keeping the TamedFlow training and inference configuration consistent with the primary experiments: Stage~1 for 10 epochs, Stage~2 for 80 epochs, $K=3$ at inference, randomized $K\sim\mathcal{U}(1,5)$ during rollout training, and the same loss weights and threshold as ImageNet. The original Latte teacher/student block configurations are retained; no width, depth, or temporal-factorization search is performed. Spatial and temporal tokens are treated as one spatiotemporal token sequence for AlignNet matching and LTE mean pooling, so the LTE adds no separate temporal backbone. Evaluation uses FaceForensics clips at $256\times256$ resolution with 16 frames. We use the same clips and evaluation protocol for all methods, batch size 1, deterministic 50-step sampling, and latency measured end-to-end on 8 A800-80GB GPUs. VAE decoding is excluded from the reported latency. The video table reports FVD and video Inception Score together with the teacher-normalized speedup.

\paragraph{MS-COCO adaptation protocol.} The shared 19.9M text-conditioning adapter is trained jointly with the teacher and student for 20 epochs (5 Stage~1 and 15 Stage~2). The implementation uses AdamW with learning rate $10^{-4}$ for the text adapter, a $0.1$ learning-rate multiplier for the backbone parameters, weight decay $0.01$, text-dropout probability $0.05$, gradient clipping at $1.0$, and EMA decay $0.9999$. The DiT backbones are frozen except for the text-conditioning parameters; the LTE and VAH are frozen during this adapter-only adaptation. The adapter maps 77-token CLIP ViT-L/14 features into the DiT conditioning width and is shared by teacher and student.

\begin{figure}[!htbp]
  \centering
  \includegraphics[width=0.78\linewidth]{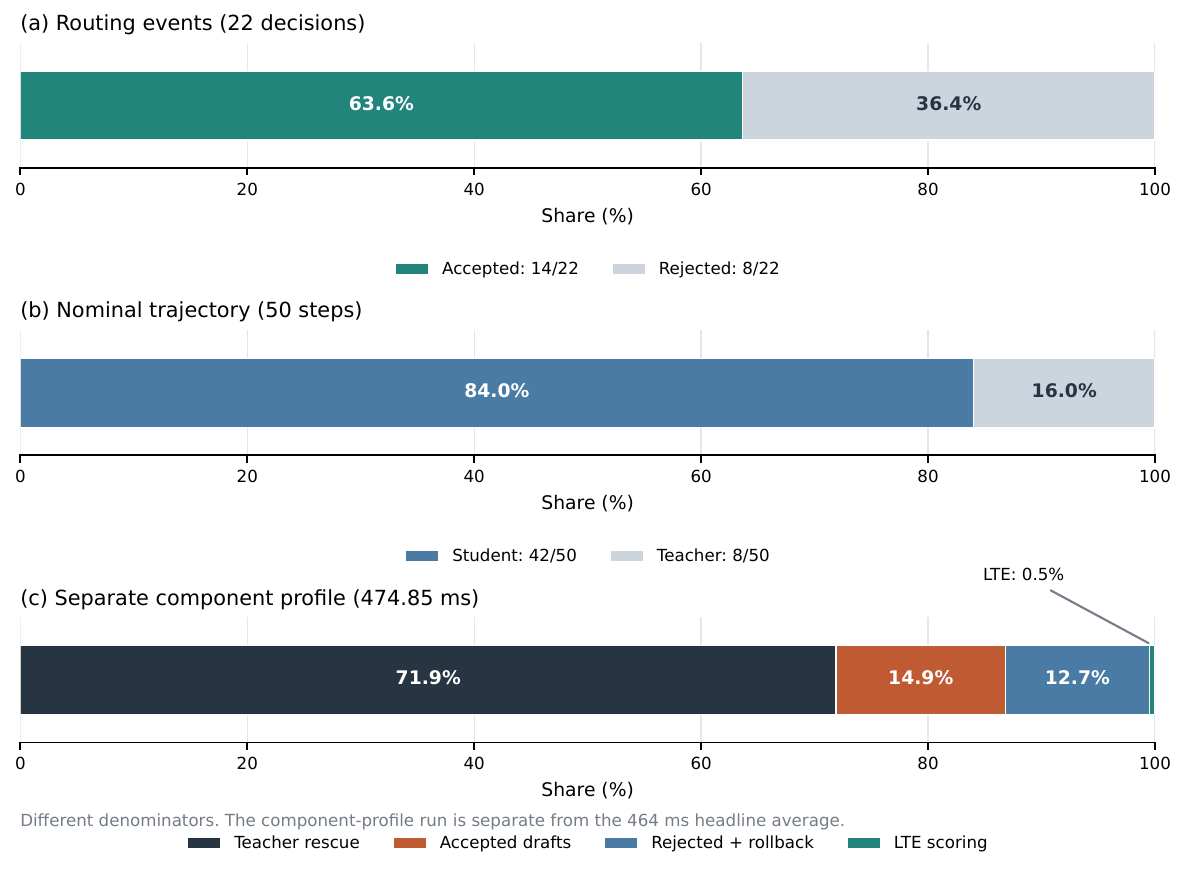}
  \caption{Compact routing and execution accounting at $K=3$, $\tau=0.80$. The panels use distinct denominators: (a) 14 accepted and 8 rejected routing events; (b) 42 student-covered and 8 teacher-rescue nominal steps; and (c) the separately collected component profile. The component profile totals $474.85$\,ms and is not a decomposition of the separate $464$\,ms headline average.}
  \label{fig:routing_profile}
\end{figure}

\begin{figure*}[!htbp]
  \centering
  \includegraphics[width=0.98\textwidth]{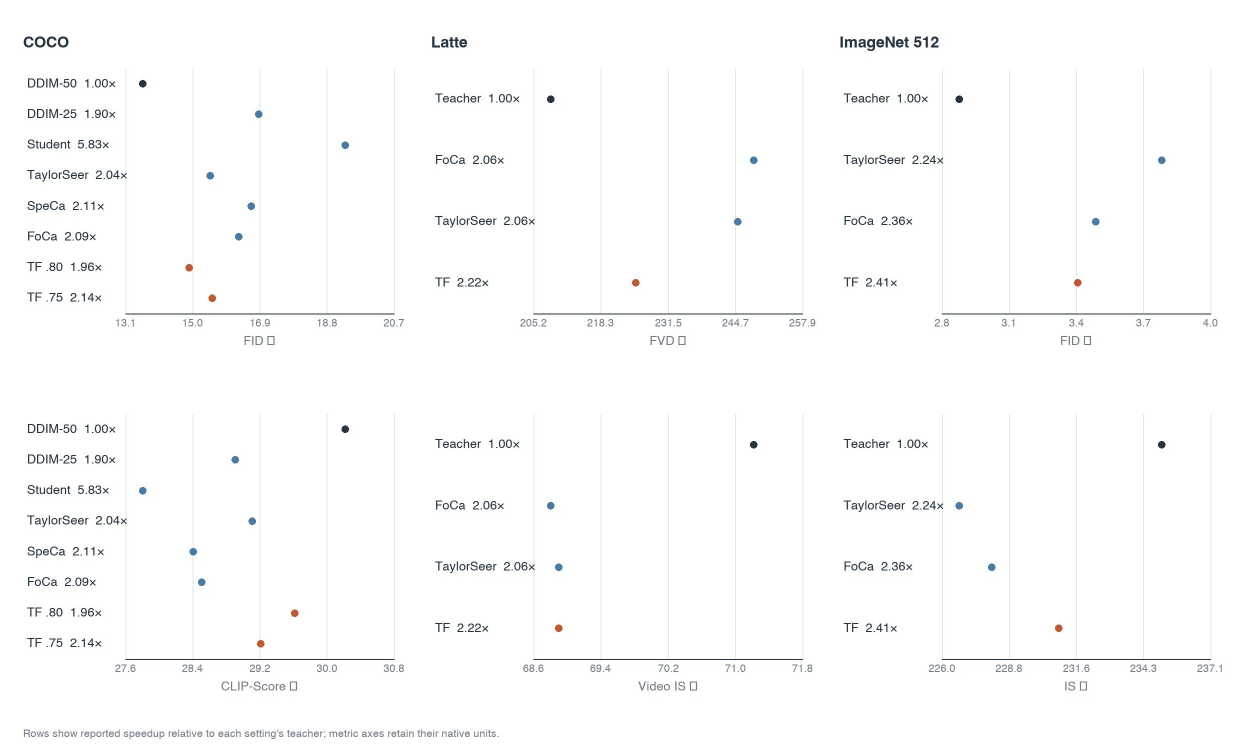}
  \caption{Cross-setting transfer shown as six compact native-unit panels. Columns are COCO, Latte/FaceForensics, and ImageNet $512\times512$; the top row reports quality metrics and the bottom row reports semantic or recognition scores. Row labels include the reported speedup relative to the teacher for that setting.}
  \label{fig:transfer_profile}
\end{figure*}

\section{Proof of the K-Step Trajectory Alignment Bound}
\label{app:proof_prop1}
Let $x_S^{(k)}$ and $x_T^{(k)}$ denote the states of the student and teacher at step $k$ during a $K$-step sequential Euler rollout, starting from the identical initial state $x_S^{(0)} = x_T^{(0)} = x_t$. We assume a fixed ODE solver step size $h$. With time arguments suppressed at the common $t_k=t+kh$, define the baseline velocity mismatch evaluated strictly on the teacher's trajectory as $\delta_v^{(k)} \triangleq \tilde v_S(x_T^{(k)}) - v_T(x_T^{(k)})$, with its supremum bounded as $\|\delta_v\|_\infty \triangleq \max_k \|\delta_v^{(k)}\|_2$. Define the teacher's Lipschitz constant $L_T \triangleq \sup_{x,t}\|J_{v_T}(x,t)\|_2$, and the Jacobian mismatch $\Delta J \triangleq \sup_{x,t}\|J_{\tilde v_S}(x,t)-J_{v_T}(x,t)\|_2$.
\begin{proof}
\textbf{Setup: recursive error accumulation.}
In a multi-step rollout, each model feeds its own prediction into the next Euler step:
\begin{equation}
x_S^{(k+1)} = x_S^{(k)} + h\,\tilde v_S(x_S^{(k)}),\qquad
x_T^{(k+1)} = x_T^{(k)} + h\,v_T(x_T^{(k)}).
\end{equation}
Let $e_k \triangleq x_S^{(k)} - x_T^{(k)}$ denote the spatial deviation at step $k$, with the initial condition $e_0 = 0$. Subtracting the two updates yields the discrete error dynamics:
\begin{equation}
e_{k+1} = e_k + h\,\big(\tilde v_S(x_S^{(k)}) - v_T(x_T^{(k)})\big).
\end{equation}
\textbf{Decomposing the velocity difference via integral Taylor expansion.}
To separate the pointwise baseline error from the accumulated spatial drift, we add and subtract the student's velocity at the teacher state, $\tilde v_S(x_T^{(k)})$:
\begin{equation}
\tilde v_S(x_S^{(k)}) - v_T(x_T^{(k)}) =
\underbrace{\big(\tilde v_S(x_T^{(k)}) - v_T(x_T^{(k)})\big)}_{\delta_v^{(k)}}
+
\underbrace{\big(\tilde v_S(x_S^{(k)}) - \tilde v_S(x_T^{(k)})\big)}_{\text{Drift induced by } e_k}.
\end{equation}
Since the velocity field $\tilde v_S: \mathbb{R}^D \to \mathbb{R}^D$ is vector-valued, the scalar Mean Value Theorem does not apply. We instead write the spatial drift using the first-order Taylor expansion with an integral remainder along the line segment connecting $x_T^{(k)}$ and $x_S^{(k)}$:
\begin{equation}
\tilde v_S(x_S^{(k)}) - \tilde v_S(x_T^{(k)})
=
\left(
\int_0^1 J_{\tilde v_S}\big(x_T^{(k)} + \tau e_k\big) \,\mathrm{d}\tau
\right)e_k.
\end{equation}
Taking the $L_2$ norm of the error recurrence, substituting the decomposition and the integral expansion, and applying the triangle inequality yields:
\begin{equation}
\begin{aligned}
\|e_{k+1}\|_2
&\leq
\|e_k\|_2
+ h\,\|\delta_v\|_\infty
+ h\,
\bigg\|
\int_0^1 J_{\tilde v_S}\big(x_T^{(k)} + \tau e_k\big) \,\mathrm{d}\tau
\bigg\|_2
\|e_k\|_2 \\
&\leq
\|e_k\|_2
+ h\,\|\delta_v\|_\infty
+ h
\left(
\int_0^1 \sup_x\|J_{\tilde v_S}(x)\|_2 \,\mathrm{d}\tau
\right)
\|e_k\|_2.
\end{aligned}
\end{equation}
Here, $\sup_x\|J_{\tilde v_S}(x)\|_2$ is the student's Lipschitz constant $L_S$. By splitting the student's Jacobian into the teacher term and the mismatch term, we bound it with the matrix triangle inequality:
\begin{equation}
L_S
=
\sup_{x,t}\|J_{\tilde v_S}\|_2
=
\sup_{x,t}\|J_{v_T} + (J_{\tilde v_S} - J_{v_T})\|_2
\leq L_T + \Delta J.
\end{equation}
Since the integral over $[0,1]$ equals one, substituting the Jacobian bound into the preceding inequality gives the recurrence:
\begin{equation}
\|e_{k+1}\|_2
\leq
\big(1 + h(L_T + \Delta J)\big)\|e_k\|_2
+
h\,\|\delta_v\|_\infty.
\end{equation}
\textbf{Solving the recurrence.}
Let $C \triangleq L_T + \Delta J$. Unrolling the recurrence with the initial condition $e_0 = 0$ yields a geometric series:
\begin{equation}
\|e_K\|_2
\leq
h\,\|\delta_v\|_\infty
\sum_{i=0}^{K-1} (1 + hC)^i
=
\frac{\|\delta_v\|_\infty}{C}
\big((1 + hC)^K - 1\big).
\end{equation}
To expose the polynomial dependence on $K$ in the few-step regime, we use the Taylor expansion of the exact scalar term $(1+hC)^K$:
\begin{equation}
(1 + hC)^K
=
1 + K(hC)
+ \frac{K(K-1)}{2}(hC)^2
+ \mathcal{O}\big(K^3(hC)^3\big).
\end{equation}
For $C=0$ the recurrence gives $\|e_K\|_2\leq Kh\|\delta_v\|_\infty$. For $KhC\ll1$, expansion of the geometric upper bound gives:
\begin{equation}
\|e_K\|_2
\leq
K h\|\delta_v\|_\infty
+\frac{K(K-1)}{2}h^2C\|\delta_v\|_\infty
+\mathcal{O}\!\left(K^3h^3C^2\|\delta_v\|_\infty\right).
\end{equation}
\textbf{Interpretation.}
The exact bound isolates the baseline velocity mismatch and shows how the combined Lipschitz/mismatch constant $C=L_T+\Delta J$ amplifies errors with the rollout horizon. In the few-step expansion, this produces the stated quadratic-in-$K$ sensitivity. Pointwise velocity matching directly supervises the forcing term but does not explicitly constrain the spatial amplification constant; it may nevertheless affect both. The expansion concerns an upper bound, not a measured error growth law.
\end{proof}

\section{Scope of the VAH Mechanism}
\label{app:proof_vah}
VAH is a residual correction trained through the Stage-2 velocity and trajectory objectives in Eq.~\eqref{eq:stage2}. No separate explicit Jacobian-matching objective is specified by this training recipe. For $v_S^{\mathrm{aligned}}=v_S+\mathrm{VAH}(v_S)$, the chain rule gives $J_{v_S^{\mathrm{aligned}}}=(I+J_{\mathrm{VAH}})J_{v_S}$. This permits changes to local derivatives but does not guarantee reduced Jacobian mismatch. Figure~\ref{fig:bound_diagnostics} reports velocity mismatch and endpoint drift only. It provides no Jacobian diagnostic: reduction of $\Delta J$ remains a theoretical possibility, not an experimentally established mechanism.

\section{Inference Details}
\label{app:inference_details}

\paragraph{Velocity-field interface and guidance.} The symbol $v_T$ denotes the effective teacher right-hand side used by the Euler solver, including the prescribed conditioning and guidance, rather than specifying the raw prediction type of the underlying DiT. Student--teacher velocity losses and teacher reference rollouts are expressed in this same solver coordinate system. The single-branch student is distilled toward the guided teacher field; a teacher rescue still evaluates the complete guided teacher update. The linear-interpolant objective in Section~\ref{sec:preliminaries} introduces the velocity-form convention and, by itself, does not identify the pretrained teacher's native output parameterization or noise schedule. An architecture name alone does not determine a checkpoint or an output-to-ODE conversion.

\paragraph{Time coordinates.} The solver and all rollout equations use increasing flow time $t$, with noise at $t=0$ and data at $t=1$. Figure~\ref{fig:lte} retains a decreasing noise-level plotting coordinate on its upper axis; that axis label is not the forward-time variable in the solver equations. The reported value 0.22 describes that plotted noise coordinate, not forward flow time. Sampling-step indices are used to compare the routing profile with execution order.

Algorithm~\ref{alg:inference} gives the full routing loop under the forward flow-time convention $t_0<\cdots<t_N$. The student drafts up to $K$ steps without teacher access, and the LTE scores the draft from the last student forward, the same type of input used during LTE training. Rejection restores the pre-draft state before one teacher rescue step; no endpoint backbone call is added for scoring.

\begin{algorithm}[t]
\caption{TamedFlow Inference}
\label{alg:inference}
\begin{algorithmic}[1]
\State \textbf{Input:} $x \sim \mathcal{N}(0,I)$, $y$, discrete times $\{t_0, \dots, t_N\}$, $K=3$, threshold $\tau$
\State $i \gets 0$
\While{$i < N$}
  \State $m\gets\min(K,N-i)$ \Comment{Clip the terminal draft horizon}
  \State $x_{\mathrm{start}} \gets x$
  \For{$k = 1$ \textbf{to} $m$} \Comment{Student drafts $m$ steps}
    \State $(v_S, \mathrm{tokens}_k) \gets \mathcal{B}_\theta(x, t_{i+k-1}, y)$
    \State $v_S\gets v_S+\mathrm{VAH}(v_S)$
    \State $x \gets x + (t_{i+k} - t_{i+k-1}) \cdot v_S$
  \EndFor
  \State $c \gets \mathrm{LTE}(\overline{\mathrm{tokens}}_m,e_{t_{i+m-1}},e_y)$ \Comment{Last draft features; no teacher access}
  \If{$c \geq \tau$}
    \State $i \gets i + m$ \Comment{Accept: advance by $m$, no teacher call}
  \Else
    \State $x \gets x_{\mathrm{start}}$ \Comment{Reject: roll back to pre-draft state}
    \State $v_T \gets \mathcal{T}_\phi(x, t_i, y)$ \Comment{Teacher rescue evaluation step}
    \State $x \gets x + (t_{i+1} - t_i) \cdot v_T$
    \State $i \gets i + 1$
  \EndIf
\EndWhile
\State \Return $x$
\end{algorithmic}
\end{algorithm}

\section{LTE Calibration and Routing Metrics}
\label{app:lte_metrics}

Table~\ref{tab:lte_metrics} reports held-out routing diagnostics at $\tau=0.80$ and $K=3$. The LTE regresses the continuous target $c^\star$ in Eq.~\eqref{eq:c_star}; binary labels are introduced only to summarize thresholded routing errors.

\paragraph{Reference labels and routing errors.} A draft event is labeled offline as accept when $c^\star\geq0.80$ and reject otherwise; the online decision accepts when the prediction $c\geq0.80$. FAR uses the 4,683 reference-reject events as its denominator, FRR uses the 11,984 reference-accept events, and AUC ranks events by continuous $c$. These labels measure agreement with the specified teacher-consistency target, not preservation of final sample quality.

\paragraph{Validation protocol.} Threshold and score-temperature choices are selected on a calibration split and then frozen. Table~\ref{tab:lte_metrics} is computed on a held-out trajectory-level split, preventing draft events from one trajectory from appearing in both sets. Continuous-target ECE is the event-weighted 10-bin discrepancy between mean $c$ and mean $c^\star$; it measures score agreement rather than a probability of correct generation.

\begin{table}[h]
  \centering
  \caption{LTE classification and calibration metrics on the ImageNet $256\times256$ validation set at $\tau=0.80$, $K=3$. Support counts reference labels obtained by thresholding $c^\star$ at 0.80. FAR and FRR use reference-reject and reference-accept events as their respective denominators. AUC uses continuous predictions; continuous-target ECE uses 10 confidence bins.}
  \label{tab:lte_metrics}
  \begin{tabular}{l|c|c}
    \toprule
    \textbf{Support (Reference)} & \textbf{Accept} & \textbf{Reject} \\
    \midrule
    Count & 11,984 & 4,683 \\
    \midrule
    \textbf{Metric} & \multicolumn{2}{c}{\textbf{Value}} \\
    \midrule
    False Accept Rate (FAR) & \multicolumn{2}{c}{0.063} \\
    False Reject Rate (FRR) & \multicolumn{2}{c}{0.051} \\
    AUC & \multicolumn{2}{c}{\textbf{0.94}} \\
    Continuous-target ECE (10 bins) & \multicolumn{2}{c}{0.034} \\
    \bottomrule
  \end{tabular}
\end{table}

The reference-positive fraction is $11{,}984/16{,}667=71.9\%$, whereas the online predicted acceptance rate is 63.6\%. These use different labels and event populations, so their difference alone does not identify distribution shift. Held-out diagnostics do not establish error rates along free-running online trajectories. Thresholding predicted confidence separates routed events by construction; these held-out diagnostics do not establish calibration against actual draft error.

\subsection{Routing-control protocol and call accounting}
\label{app:routing_controls}

\begin{table}[t]
  \centering
  \small
  \caption{Routing controls on ImageNet $256\times256$. Latency and FID-50K are measured. Teacher updates are fixed by the control policy or adaptive for LTE; this is not an equal-teacher-budget comparison with LTE.}
  \label{tab:routing_controls}
  \setlength{\tabcolsep}{5pt}
  \begin{tabular}{lccc}
    \toprule
    \textbf{Routing policy} & \textbf{Teacher updates} & \textbf{Latency (s)} $\downarrow$ & \textbf{FID} $\downarrow$ \\
    \midrule
    Pure Student & 0 & 0.180 & 5.56 \\
    Static: Random Steps & 8 & 0.348 & 3.72 \\
    Static: Fixed Periodic & 8 & 0.335 & 3.45 \\
    Dynamic: Random Accept & 8 & 0.465 & 4.05 \\
    \textbf{Dynamic: LTE} & Adaptive & 0.464 & \textbf{2.58} \\
    \bottomrule
  \end{tabular}
\end{table}

The controls in Table~\ref{tab:routing_controls} use the same trained student checkpoint, VAH, 50-step Euler grid, CFG configuration, and ImageNet FID-50K protocol. Latency follows the single-A800, batch-one protocol in the main experiments. These measurements evaluate inference policies without retraining the student.

\paragraph{Static policies.} Pure Student performs all 50 updates with the student and never calls the teacher. Random Steps independently selects a set of eight distinct integration steps per trajectory before sampling and replaces those student updates with teacher updates. Fixed Periodic uses the one-based integration-step indices $6,12,18,24,30,36,42,48$. Neither static policy executes a speculative student draft at a teacher-designated step. Both therefore execute exactly 42 student forwards and eight teacher updates.

\paragraph{Budget-constrained random decisions.} Random Accept uses the same $K=3$ draft-and-rollback structure as LTE, but substitutes a random accept decision with nominal probability $p=0.636$. The policy enforces exactly eight teacher updates per trajectory. Once that budget is exhausted, subsequent drafts are accepted. A budget-feasibility override forces rejection when accepting a draft would leave too few integration steps to spend the remaining teacher budget. A rejection discards the draft, restores its starting state, and advances one teacher step. Consequently, the actual decisions are not independent Bernoulli draws throughout the trajectory, and $p$ is not a measured acceptance rate.

\paragraph{LTE policy.} LTE uses the original confidence threshold $\tau=0.80$ without a fixed per-trajectory teacher budget. Its teacher and student call counts depend on the trajectory. The previously reported 14 accepted and eight rejected events describe a reference execution, not a measured evaluation-set mean. Thus the random control and LTE share the execution framework and nearly match measured latency, but are not established to have identical average teacher usage.

\paragraph{LTE evaluations versus backbone calls.} Let $A$ and $R$ denote accepted and rejected draft events. Each draft triggers exactly one LTE evaluation, regardless of whether it is accepted; only rejection triggers a teacher update. For the terminal-clipped $K=3$ loop,
\begin{equation}
N_T=R,\qquad N_{\mathrm{LTE}}=A+R,\qquad
N_S=\sum_{j=1}^{A+R}m_j,\qquad
\sum_{j\in\mathcal A}m_j+R=50.
\end{equation}
Here $m_j$ is the number of student steps executed in draft $j$. Only a terminal accepted draft can have fewer than three steps. Conditional on $R=8$, the 42 committed student steps therefore consist of exactly $A=14$ accepted drafts, and $N_{\mathrm{LTE}}=14+8=22$. The event acceptance rate is $14/22=63.6\%$, whereas the fraction of committed integration steps executed by the student is $42/50=84\%$. These are different denominators. If all rejected drafts also have length three, $N_S=66$; terminal clipping can reduce student work without changing the 22 routing events in this case. Thus eight teacher updates correspond to 22 LTE evaluations in the stated 50-step trajectory, not eight or 66 LTE evaluations. The budget-constrained random policy similarly has 22 decision events when $R=8$, but uses random decisions rather than requiring LTE network scores. This conditional identity does not establish the mean teacher usage of unconstrained LTE over the evaluation set.

\paragraph{Derived student-forward counts.} For the random control, the eight teacher updates leave 42 committed student steps. Executed student forwards additionally include every discarded draft:
\begin{equation}
N_S=42+\sum_{j=1}^{8}m_j,\qquad 1\leq m_j\leq3,
\end{equation}
where $m_j$ is the executed length of rejected draft $j$. This gives $N_S=66$ when all eight rejected drafts have length three; terminal clipping can reduce the count. For the reference LTE execution, full-length drafts similarly give $N_S=3(14+8)=66$ and $N_T=8$. These are conditional accounting values, not measured mean call counts or latency predictions.

\paragraph{Interpretation.} The dynamic comparison gives an FID difference of 1.47 at a latency difference of 0.001~s, supporting the usefulness of learned routing at the measured operating point. The 0.001~s difference is not interpreted as a speed advantage. Static controls are faster (0.335--0.348~s) but have higher FID (3.45--3.72). These measurements do not establish that LTE identifies high-curvature regions, nor do they isolate decision quality at exactly matched teacher usage. The routing-control table reports point estimates; variability across random-policy seeds is not quantified here.

\section{Additional Ablation Study}
\label{app:ablation}

Table~\ref{tab:ablation_additional} extends the main ablation with randomized-horizon variants. Unless stated otherwise, rows use ImageNet $256\times256$, $K=3$, and $\tau=0.80$.

\begin{table}[h]
  \centering
  \small
  \caption{Protocol-consistent extended ablation on ImageNet $256\times256$. Inference uses $K=3$, $\tau=0.80$; the randomized horizons refer to training.}
  \label{tab:ablation_additional}
  \setlength{\tabcolsep}{10pt}
  \begin{tabular}{lcccc}
    \toprule
    \textbf{Variant} & \textbf{Accept Rate \%} $\uparrow$ & \textbf{Speedup} $\uparrow$ & \textbf{FID} $\downarrow$ & \textbf{IS} $\uparrow$ \\
    \midrule
    \rowcolor{gray!10} \textbf{Full TamedFlow (default)} & 63.6 & 2.25$\times$ & \textbf{2.58} & \textbf{233.43} \\
    \midrule
    \textbf{Module and loss ablations} & \multicolumn{4}{c}{} \\
    \ \ w/o AlignNet & 46.2 & 1.87$\times$ & 2.95 & 211.34 \\
    \ \ w/o VAH & 39.3 & 1.63$\times$ & 3.23 & 205.71 \\
    \ \ w/o $\mathcal{L}_{\mathrm{roll}}$ & 54.2 & 2.02$\times$ & 2.71 & 230.15 \\
    \ \ w/o $\mathcal{L}_z$ warmup & 46.2 & 1.81$\times$ & 3.29 & 203.50 \\
    \ \ w/o $\mathcal{L}_z$ & 28.1 & 1.45$\times$ & 4.15 & 198.80 \\
    \midrule
    \textbf{$K$ strategy ablations} & \multicolumn{4}{c}{} \\
    \ \ Random $K\sim\mathcal{U}(1,3)$ & 75.0 & 2.14$\times$ & 2.53 & 233.22 \\
    \ \ Random $K\sim\mathcal{U}(1,5)$ (default) & 63.6 & 2.25$\times$ & 2.58 & 233.43 \\
    \midrule
    \ \ Pure Student (no LTE, no routing) & -- & 5.80$\times$ & 5.56 & 195.07 \\
    \bottomrule
  \end{tabular}
\end{table}

\begin{table}[h]
  \centering
  \small
  \caption{Same-student validation under the common DDIM-50 protocol. Only FID is reported because these direct adaptive baselines were evaluated as quality diagnostics, not as the headline latency comparison.}
  \label{tab:same_student_validation}
  \setlength{\tabcolsep}{6pt}
  \begin{tabular}{lc@{\qquad}lc}
    \toprule
    \textbf{Strategy} & \textbf{FID} & \textbf{Strategy} & \textbf{FID} \\
    \midrule
    FoCa & 10.40 & TaylorSeer & 11.92 \\
    SpeCa & 13.18 & Dense Student & 5.56 \\
    \bottomrule
  \end{tabular}
\end{table}

The extended rows reinforce the main ablation: representation alignment, residual correction, rollout loss, and endpoint loss each improve the usable routed operating point. Randomizing training over $K\sim\mathcal{U}(1,3)$ yields a nearby but distinct point (FID 2.53 at $2.14\times$), whereas the wider default $\mathcal{U}(1,5)$ reaches FID 2.58 at $2.25\times$.

At fixed $K=3$, the 75.0\% point corresponds to 15 accepted and 5 rejected events (60 student and 5 teacher calls under full-length drafts), whereas the 63.6\% point corresponds to 14 accepted and 8 rejected events (66 student and 8 teacher calls). Thus the call-count model predicts lower backbone work for the higher-acceptance point. Its measured $2.14\times$ versus $2.25\times$ ordering was obtained from separate checkpoint/runtime measurements, and the available component trace does not provide a matched per-call timing decomposition; we therefore do not interpret this ordering as a monotonic acceptance--latency law.

Under the common DDIM-50 diagnostic, directly attaching FoCa, TaylorSeer, or SpeCa to the compressed student degrades FID relative to dense student execution. This documents the Combination Gap for these implementations; it does not test whether the same methods remain fragile after trajectory training.

\begin{figure}[!htbp]
  \centering
  \includegraphics[width=0.9\linewidth]{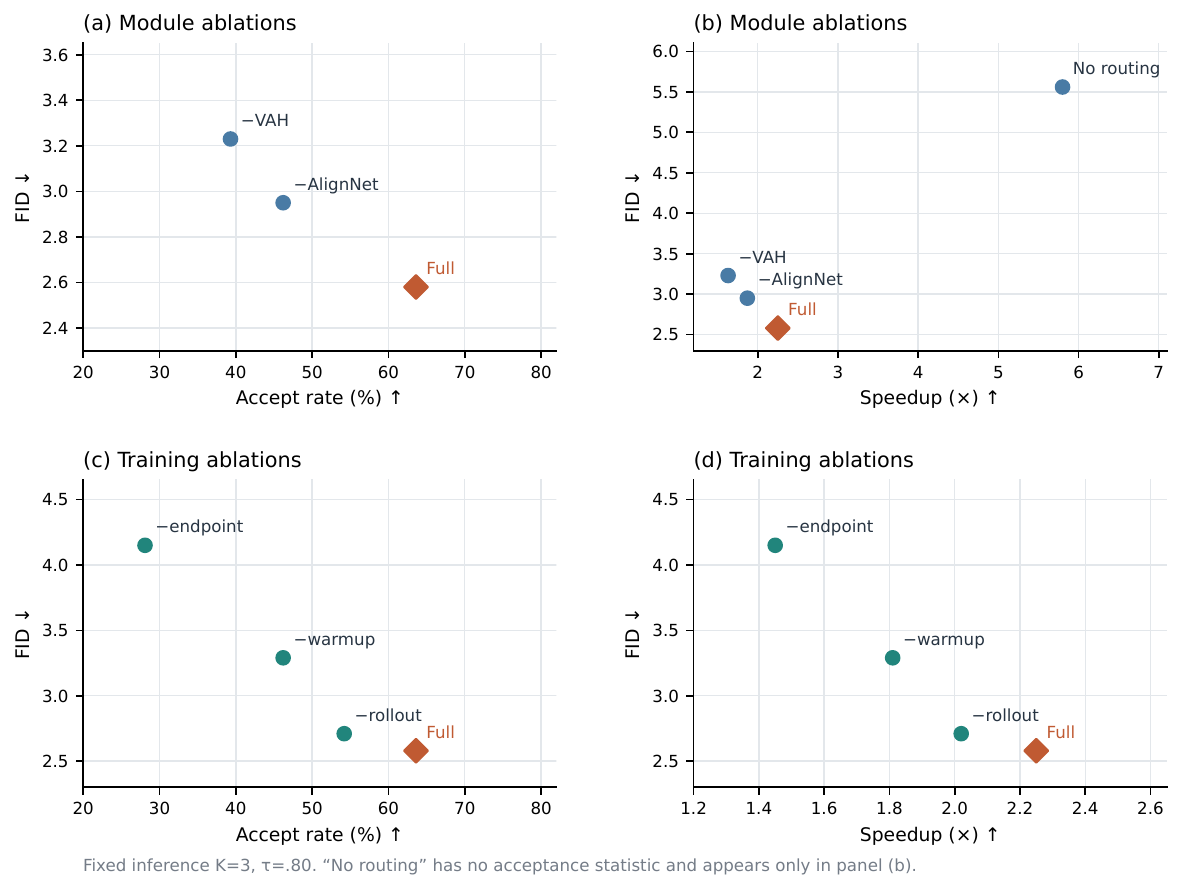}
  \caption{Module and training ablations from Tables~\ref{tab:abl} and~\ref{tab:ablation_additional}. The panels relate FID to acceptance or speedup at $K=3$, $\tau=0.80$. The no-routing student has no acceptance statistic and is omitted from the acceptance panels. Points are reported operating points; no fitted trend or uncertainty interval is implied.}
  \label{fig:ablation_multiples}
\end{figure}

\begin{table}[h]
  \centering
  \small
  \caption{Solver robustness on ImageNet $256\times256$. DPM-Solver++(2M) is evaluated as a handoff-safe higher-order solver diagnostic; the formal bound remains DDIM-specific.}
  \label{tab:solver_robustness}
  \setlength{\tabcolsep}{4pt}
  \begin{tabular}{llccccc}
    \toprule
    \textbf{Solver} & \textbf{Method} & \textbf{Latency} & \textbf{Speedup} & \textbf{Accept} & \textbf{FID} & \textbf{IS} \\
    \midrule
    Euler & Teacher & 1.044 & 1.00$\times$ & -- & 2.35 & 237.62 \\
    Euler & TamedFlow & 0.464 & 2.25$\times$ & 63.6\% & 2.58 & 233.43 \\
    DPM-Solver++(2M) & Teacher & 1.089 & 1.00$\times$ & -- & 2.25 & 244.12 \\
    DPM-Solver++(2M) & TamedFlow & 0.455 & 2.39$\times$ & 75.0\% & 2.44 & 237.55 \\
    \bottomrule
  \end{tabular}
\end{table}

For the DPM-Solver++(2M) diagnostic, the multistep history is associated with the current accepted state. When a student draft is rejected, the tentative student states and derivative history are discarded together; the sampler rolls back, applies one teacher rescue update, and starts a fresh solver history at the rescued state. No student derivative is mixed with a teacher derivative across a rejection boundary, and the first post-handoff update uses the solver's order-one startup rule before the two-step history is rebuilt. Thus Table~\ref{tab:solver_robustness} tests end-to-end robustness under an explicit reset policy; Proposition~\ref{prop:curvature_trap} continues to cover only the Euler analysis.

\begin{figure}[!htbp]
  \centering
  \includegraphics[width=0.78\linewidth]{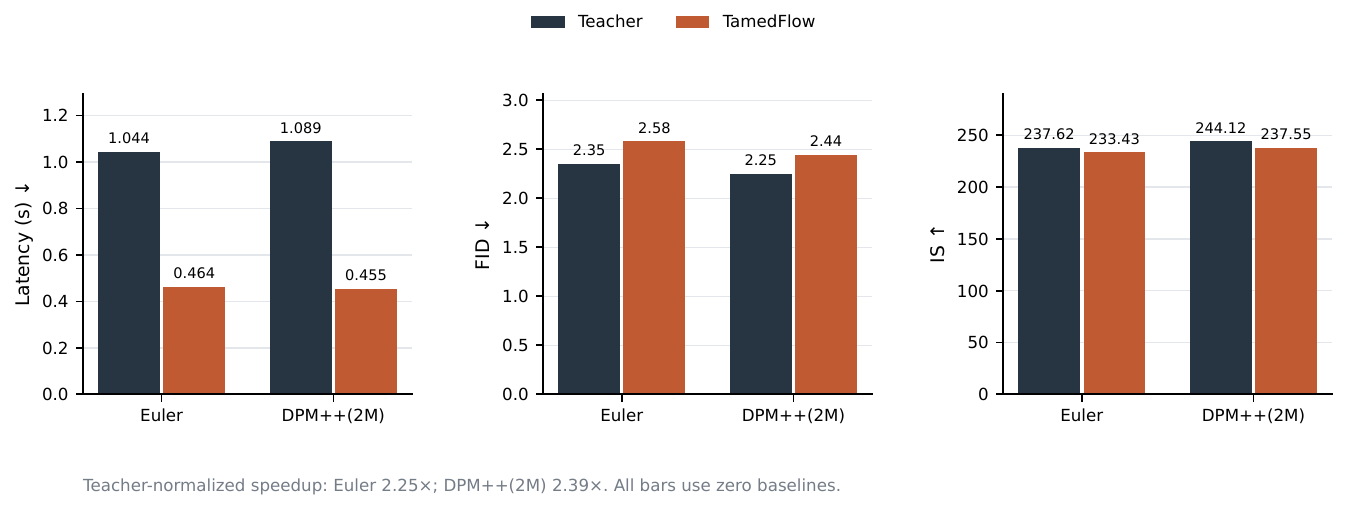}
  \caption{Paired teacher and TamedFlow measurements under Euler and DPM-Solver++(2M), reproducing Table~\ref{tab:solver_robustness}. Speedups are relative to the corresponding solver teacher. The DPM++ result uses the stated history-reset policy; the theoretical bound remains DDIM-specific.}
  \label{fig:solver_robustness}
\end{figure}

\begin{figure}[!htbp]
  \centering
  \includegraphics[width=0.8\linewidth]{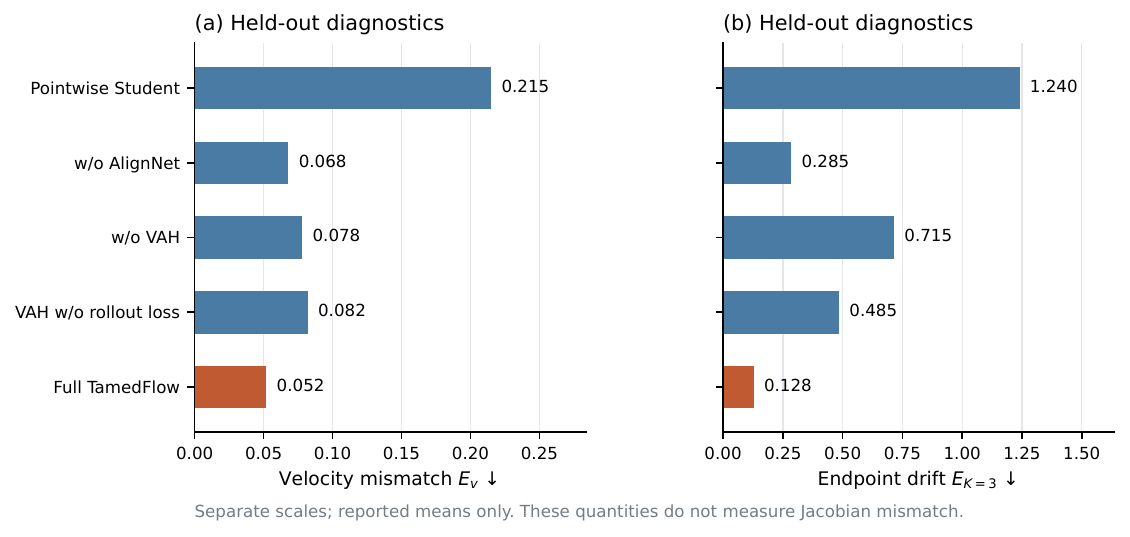}
  \caption{Held-out velocity mismatch and three-step endpoint drift by model variant. The panels replace the redundant numerical table and use separate zero-based axes. These means do not measure Jacobian mismatch or establish that training minimizes the theoretical bound constants; no uncertainty estimates are available.}
  \label{fig:bound_diagnostics}
\end{figure}

Figure~\ref{fig:bound_diagnostics} reports velocity and endpoint errors, not Jacobian alignment. Removing AlignNet or VAH changes the measured velocity or endpoint mismatch, while removing rollout supervision increases short-horizon drift.

Figure~\ref{fig:hyper_k} visualizes the impact of draft horizon $K$ and routing threshold $\tau$ on generation quality and inference efficiency. The four marginal panels show FID and speedup versus $\tau$ (top) and $K$ (bottom). These one-dimensional sweeps motivate the default operating point but do not establish a globally optimal Pareto frontier.

\begin{figure}[htbp]
  \centering
  \includegraphics[width=0.60\textwidth,trim=173bp 0 0 0,clip]{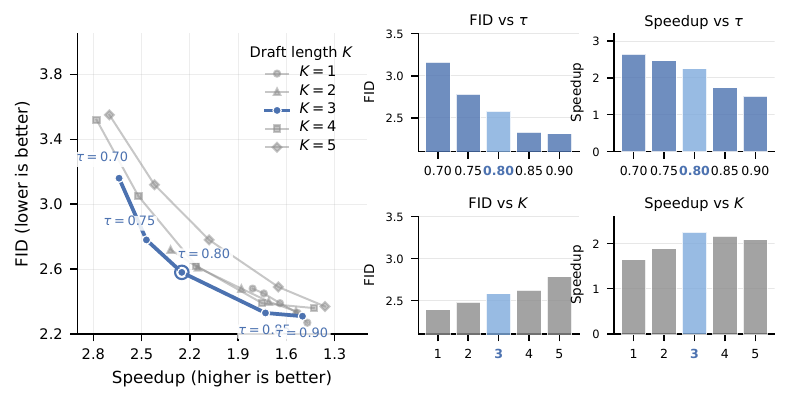}
  \caption{Impact of draft length $K$ and routing threshold $\tau$ on generation quality and inference efficiency. Columns: FID (left) and speedup (right). Rows: threshold $\tau$ (top) and draft length $K$ (bottom). The sweeps motivate the default $K=3$, $\tau=0.80$ configuration.}
  \label{fig:hyper_k}
\end{figure}

\section{Limitations}
\label{app:limitations}

While TamedFlow provides empirical gains across several settings, our findings are subject to several key limitations:

\paragraph{Theoretical scope of rollout analysis.} The discrete Gr\"onwall bound in Proposition~\ref{prop:curvature_trap} characterizes error accumulation under sequential Euler drafting to motivate multi-step trajectory supervision. It does not constitute a universal failure law for internal feature caching or forecasting methods, nor does it provide formal convergence or stability guarantees for higher-order ODE solvers such as DPM-Solver++.

\paragraph{Approximate nature of trajectory routing.} The Learnable Trajectory Estimator (LTE) operates as an empirical regression model trained on offline consistency targets rather than an exact, distribution-preserving verification mechanism. Rejection decisions are heuristic; false rejections incur latency penalties through redundant teacher evaluations, whereas false acceptances may allow minor perceptual artifacts to propagate into the generated sample.

\paragraph{Deployment and infrastructure overhead.} TamedFlow relies on a multi-stage offline training curriculum and maintains the teacher, student, VAH, and LTE models co-resident in GPU memory during inference ($\sim0.5$~GB VRAM overhead in our FP32 profile). As a result, it does not fully eliminate the teacher model from deployment. Furthermore, our reported speedups reflect batch-size-one latent generation on single GPUs; characterization under large-batch throughput, concurrent serving pipelines, and extreme memory-constrained edge environments remains an avenue for future work.

\section{LLM Usage Statement}
Large language models were used for language editing and consistency checks across the manuscript. Experimental results are author-provided; the authors are responsible for checking the final equations, measurements, citations, and implementation details.

\end{document}